%
%
%
%

\documentclass[runningheads,a4paper]{llncs}

\usepackage{graphicx}
\usepackage[all, knot]{xy}
\usepackage{amssymb,amsmath,latexsym}
\usepackage{times}
 \usepackage[usenames]{color} 
\newtheorem{lem}{Lemma}
\newcommand{\tuple}[1]{\langle #1 \rangle}  
\newcommand{\att}{\rightarrow}  
  
\newcommand{\AR}{\mathcal{A}} 
\newcommand{\attack}[2]{#1\rightarrow #2}  
\newcommand{\attackd}[2]{#1\rightarrow^\mathrm{d} #2}  
\newcommand{\attackees}[1]{#1^\rightarrow} 
\newcommand{\attackers}[1]{#1^\leftarrow} 
\newcommand{\dg}[1]{\mathrm{d}(#1)} 
\newcommand{\dgn}[1]{#1^\mathrm{d}} 
\newcommand{\nmd}[1]{#1^\textsc{def}} 
\newcommand{\und}[1]{#1^{\textsc{d}\mathrm{o}\textsc{d}}} 

\usepackage{url}


\newcommand{\keywords}[1]{\par\addvspace\baselineskip
\noindent\keywordname\enspace\ignorespaces#1}

\begin{document}

\mainmatter  

\title{Defense semantics of argumentation: encoding reasons for accepting arguments} 

\titlerunning{Defense semantics of argumentation}

%
%
\author{Beishui Liao\inst{1} and Leendert van der Torre\inst{2} } 
%
\authorrunning{B. Liao and L. van der Torre} 

\institute{
Zhejiang University, China
\and
University of Luxembourg, Luxembourg
}

%
%

\toctitle{Lecture Notes in Computer Science}
\tocauthor{Authors' Instructions}
\maketitle

\begin{abstract}
In this paper we show how the defense relation among abstract arguments can be used to encode the reasons for accepting arguments.  After introducing a novel notion of defenses and defense graphs,  we propose a defense semantics together with a new notion of defense equivalence of argument graphs, and compare defense equivalence with standard equivalence and strong equivalence, respectively. Then, based on defense semantics, we define two kinds of reasons for accepting arguments, i.e., direct reasons and root reasons, and a notion of root equivalence of argument graphs. Finally, we show how the notion of root equivalence can be used in argumentation summarization. 
\keywords{abstract argumentation,  defense graph, defense semantics, argumentation equivalence, argumentation summarization}
\end{abstract}

\section{Introduction}
Abstract argumentation is mainly about evaluating the status of arguments in an argument graph  \cite{DBLP:journals/ai/DunneDLW15,Baroni:KER,DBLP:journals/ai/CharwatDGWW15}, which is composed of a set of abstract arguments and a set of attacks between them \cite{DBLP:journals/ai/Dung95}. In many topics such as equivalence \cite{DBLP:conf/kr/OikarinenW10,DBLP:conf/kr/Baumann16,DBLP:conf/kr/BaumannS16}, summarization \cite{iomultipoles}, and dynamics in argumentation \cite{DBLP:journals/jair/CayrolSL10,DBLP:journals/ai/FerrettiTGES17}, the notion of extensions plays a central role.  Since in classical argumentation semantics, an extension is a set of arguments that are collectively accepted, the existing theories and approaches based on this notion are mainly focused on exploiting the status of individual arguments. However,  besides the status of individual arguments, in many situations, we need to know the reasons for accepting arguments in terms of a defense relation. The following are two simple examples. 

 \begin{picture}(206,32)
 \put(0,23){\xymatrix@C=0.5cm@R=0.1cm{
  \mathcal{F}_1: &a\ar[r]&c_1\ar[r]&c_2\ar[r]&b\ar[ld]& \mathcal{F}_2: & a\ar[r]&b\ar@/^0.0cm/[l] \\
 & &c_4\ar[lu] &c_3\ar[l] &  }}  
    
    \end{picture}

First, consider $a$, $b$ in $\mathcal{F}_1$ and $\mathcal{F}_2$. In $\mathcal{F}_1$, accepting $a$ is a reason to accept $c_2$, accepting $c_2$ is a reason to accept $c_3$, and  accepting $c_3$ is a reason to accept $a$. If we allow this relation to be transitive, we find that accepting $a$ is a reason to accept $a$. Similarly,  accepting $b$ is a reason to accept $b$. Meanwhile, in $\mathcal{F}_2$, we have: accepting $a$ is a reason to accept $a$, and accepting $b$ is a reason to accept $b$. So, from the perspective of the reasons for accepting $a$ and $b$,  $\mathcal{F}_2$ is equivalent to $\mathcal{F}_1$, or $\mathcal{F}_2$  is a summarization of  $\mathcal{F}_1$. 

Second, consider the question when two argument graphs are equivalent in a dynamic setting. For $\mathcal{F}_3$ and $\mathcal{F}_4$ below,  both of them have a complete extension $\{a, c\}$. However, the reasons of accepting $c$ in $\mathcal{F}_3$ and $\mathcal{F}_4$ are different. For the former, $c$ is defended by $a$, while for the latter,  $c$ is unattacked and has no defender. In this sense, $\mathcal{F}_3$ and $\mathcal{F}_4$ are not equivalent. For example, in order to change the status of argument $c$ from ``accepted'' to ``rejected'', in $\mathcal{F}_3$, one may produce a new argument to attack the defender $a$,  or to directly attack $c$. However, in $\mathcal{F}_4$ using an argument to attack $a$ cannot change the status of $c$, since $a$ is not a defender of $c$.

    \begin{picture}(206,22)
 \put(0,8){\xymatrix@C=0.6cm@R=0.2cm{
  \mathcal{F}_3: \hspace{0.3cm} a\ar[r]&b\ar[r]&c &\hspace{0.5cm} \mathcal{F}_4:  \hspace{0.3cm}  a\ar[r]&b&c  
   }}
    
    \end{picture}

From the above two examples, one question arises: under what conditions, can two argument graphs be viewed as equivalent? The existing notions of argumentation equivalence, including standard equivalence and strong equivalence, are not sufficient to capture the equivalence of the argument graphs in the situations mentioned above.  More specifically, $\mathcal{F}_1$ and $\mathcal{F}_2$ are not equivalent in terms of the notion of standard equivalence or that of strong equivalence, but they are equivalent in the sense that the reasons for accepting arguments $a$ and $b$ in these two graphs are the same. $\mathcal{F}_3$ and $\mathcal{F}_4$ are equivalent in terms of standard equivalence, but they are not equivalent in the sense that the reasons for accepting $c$ in these two graphs are different. Although the notion of strong equivalence can be used to identify the difference between $\mathcal{F}_3$ and $\mathcal{F}_4$, conceptually it is not defined from the perspective of reasons for accepting arguments.

Note that the reasons for accepting arguments in the above two examples are depicted in terms of a defense relation, which plays a central role in Dung's concept of admissibility and thus in admissibility based semantics. So, it is natural to define a new semantics in this paper based on a defense relation such that the reasons for accepting arguments can be encoded. 

Since the new semantics is defined at the level of abstract argumentation, it can be applied to various structured argumentations systems. In particular,  in the field of legal reasoning \cite{DBLP:books/sp/09/Bench-CaponPS09}, argumentation can be used to model legal interpretation, dialogue, and deontic reasoning, etc. In all these applications, it is useful to make clear the reasons for accepting arguments in terms of a defense relation. In this paper, we will formulate a defense semantics for abstract argumentation, while its application to various structured argumentation systems is left to future work. The structure of this paper is as follows. In Section 2, we introduce some basic notions of argumentation semantics. In Section 3, we propose the notions of defenses and defense graphs, which lay a foundation of this paper. In Section 4, we formulate defense semantics by applying classical argumentation semantics to defense graphs, and study some properties of this new semantics. In Section 5, we introduce two kinds of reasons for accepting arguments in terms of defense semantics. We conclude in Section 6.

\section{Argumentation semantics}
An argument graph or argumentation framework (AF) is defined as $\mathcal{F} = (\AR, \att)$, where $\AR$ is a finite set of arguments and $\att\subseteq \AR\times \AR$ is a set of attacks between arguments \cite{DBLP:journals/ai/Dung95}. 

Let $\mathcal{F} = (\AR, \att)$ be an argument graph. Given a set $B\subseteq \AR$ and an argument $\alpha\in \AR$, $B$ attacks $\alpha$, denoted $\attack{B}{\alpha}$,  iff there exists $\beta\in B$ such that $\attack{\beta}{\alpha}$. 
Given an argument $\alpha\in \AR$, let  $\attackers{\alpha} = \{\beta\in \AR \mid \attack{\beta}{\alpha}\}$ be the set of arguments attacking $\alpha$, and $ \attackees{\alpha} = \{\beta\in \AR \mid \attack{\alpha}{\beta}\}$ be the set of arguments attacked by $\alpha$. When $\attackers{\alpha} = \emptyset$, we say that $\alpha$ is unattacked, or $\alpha$ is an initial argument. 

Given $\mathcal{F} = (\AR, \att)$ and $E\subseteq \AR$, we say: $E$  is \emph{conflict-free} iff $\nexists \alpha, \beta\in E$ such that $\attack{\alpha}{\beta}$; $\alpha\in \AR$ is \textit{defended} by $E$  iff $\forall{\beta}\in \alpha^\leftarrow$, it holds that $\attack{E}{\beta}$; $E$ is \textit{{admissible} } iff  $E$ is  conflict-free, and each argument in $E$ is defended by $E$; $E$ is a \textit{complete extension} iff $E$ is admissible, and each argument in $\AR$ that is defended by $E$ is in $E$; $E$ is a \textit{grounded extension} iff $E$ is the minimal (w.r.t. set-inclusion) complete extension; $E$ is a \textit{preferred extension} iff $E$ is a maximal (w.r.t. set-inclusion) complete extension; $E$ is a \textit{stable extension} iff $E$ is conflict-free and $E$ attacks each argument that is not in $E$. We use $\mathrm{\sigma}(\mathcal{F})$  to denote the set of argument extensions of $\mathcal{F}$ under semantics $\sigma$, where $\sigma$ is a function mapping each argument graph to a set of argument extensions. We use $\mathrm{co}, \mathrm{gr}, \mathrm{pr}$ and $\mathrm{st}$ to denote complete, grounded, preferred and stable semantics respectively. There are some other argumentation semantics (cf. \cite{Baroni:KER} for an overview). 

For argument graphs $\mathcal{F}_1= (\AR_1, \rightarrow_1)$  and $\mathcal{F}_2 = (\AR_2, \rightarrow_2)$, we use $\mathcal{F}_1\cup \mathcal{F}_2$ to denote $ (\AR_1\cup \AR_2, \rightarrow_1\cup \rightarrow_2)$.  The standard equivalence and strong equivalence of argument graphs are defined as follows. For simplicity, when we talk about equivalence of AFs, we mainly consider the cases under complete semantics, while the full-fledged study of equivalence will be presented in an extended version of the present paper.

\begin{definition}[Standard and strong equivalence of AFs]  \cite{DBLP:conf/kr/OikarinenW10} \label{def-sequi}
Let  $\mathcal{F}$  and $\mathcal{G}$ be two argument graphs, and $\sigma$ be a semantics.
 $\mathcal{F}$  and $\mathcal{G}$ are of standard equivalence w.r.t. a semantics $\sigma$, in symbols $\mathcal{F} \equiv^\sigma \mathcal{G}$, iff $\sigma(\mathcal{F}) = \sigma(\mathcal{G})$.
$\mathcal{F}$  and $\mathcal{G}$ are of strong equivalence w.r.t. a semantics $\sigma$, in symbols $\mathcal{F} \equiv_s^\sigma \mathcal{G}$, iff for every argument graph $\mathcal{H}$, it holds that $\sigma(\mathcal{F}\cup \mathcal{H}) = \sigma(\mathcal{G}\cup \mathcal{H})$. 
\end{definition} 

\begin{example}
Consider $\mathcal{F}_1 - \mathcal{F}_4$ in Section 1.  In terms of Definition \ref{def-sequi}, under complete semantics, since $\mathrm{co}(\mathcal{F}_1)\neq \mathrm{co}(\mathcal{F}_2)$,  $\mathcal{F}_1\not\equiv^\mathrm{co} \mathcal{F}_2$, which implies that $\mathcal{F}_1\not\equiv_s^\mathrm{co} \mathcal{F}_2$. And, since $\mathrm{co}(\mathcal{F}_3)= \mathrm{co}(\mathcal{F}_4)$,  $\mathcal{F}_3\equiv^\mathrm{co} \mathcal{F}_4$. Let $\mathcal{H}= (\{d\}, \{\attack{d}{a}\})$. Since $\mathrm{co}(\mathcal{F}_3\cup \mathcal{H}) \neq \mathrm{co}(\mathcal{F}_4\cup \mathcal{H})$, $\mathcal{F}_3\not\equiv_s^\mathrm{co} \mathcal{F}_4$. 
\end{example}

Given an argument graph $\mathcal{F} = (\AR, \rightarrow)$, the kernel of $\mathcal{F}$ under complete semantics, call \textit{c-kernel},  is defined as follows. 

\begin{definition}[c-kernel of an AF] \cite{DBLP:conf/kr/OikarinenW10}
For an argument graph $\mathcal{F} = (\AR, \rightarrow)$,  the c-kernel of  $\mathcal{F}$ is defined as $\mathcal{F}^\mathrm{ck} = (\AR, \rightarrow^\mathrm{ck})$, where
$
\rightarrow^\mathrm{ck}  =  \rightarrow \setminus \{\attack{\alpha}{\beta} \mid \alpha \neq \beta, \attack{\alpha}{\alpha}, \attack{\beta}{\beta}\}
$.
\end{definition}

According to \cite{DBLP:conf/kr/OikarinenW10}, it holds that $\mathrm{co}(\mathcal{F}) = \mathrm{co}(\mathcal{F}^\mathrm{ck})$, and for any AFs $\mathcal{F}$ and $\mathcal{G}$, $\mathcal{F}^\mathrm{ck} = \mathcal{G}^\mathrm{ck}$ iff $\mathcal{F} \equiv_s^\mathrm{co} \mathcal{G}$.


\section{Defenses and defense graph}
According to classical argumentation semantics, with respect to an extension $E$, an argument $\alpha\in E$  is accepted because it is initial or for all $\gamma\in \attackers{\alpha}$, $\gamma$ is attacked by an argument in $E$. So, for all $\alpha, \beta\in E$, if there exists $\gamma\in \AR\setminus E$ such that $\attack{\alpha}{\gamma}$ and $\attack{\gamma}{\beta}$, we say that accepting $\alpha$ is a (partial) reason to accept $\beta$, denoted as $\tuple{\alpha,\beta}$. And, for all $\beta\in E$ if $\attackers{\beta}= \emptyset$  (i.e., $\beta$ is an initial argument), we say that $\beta$ is accepted without a reason, denoted as $\tuple{\o,\beta}$ where $\o$ is a symbol denoting an empty position. In this paper, $\tuple{\alpha,\beta}$ or $\tuple{\o,\beta}$ is called a \textit{defense}. 

Without referring to any specific extension, a defense $\tuple{\alpha,\beta}$ can be viewed as a relation between $\alpha$ and $\beta$ satisfying some constraints. Intuitively, there are the following two minimal constraints. First, $\{\alpha, \beta\}$ is conflict-free. Otherwise, they can not be both accepted. Second, there exists $\gamma\in \AR \setminus \{\alpha,\beta\}$ such that $\attack{\alpha}{\gamma}$ and $\attack{\gamma}{\beta}$, in the sense that $\alpha$ defends $\beta$ by attacking $\beta$'s attacker $\gamma$. Regarding the defense $\tuple{\o,\beta}$, the only constraint is that $\beta$ is initial.   

\begin{example}
Consider $\mathcal{F}_5$ below. $\tuple{\o,a}$, $\tuple{a,c}$ and $\tuple{b,d}$ are defenses. Note that the three defenses do not refer to a specific extension. 


 \begin{picture}(206,20)
 \put(0,8){\xymatrix@C=0.5cm@R=0.4cm{
\mathcal{F}_5:  & a\ar[r] & b\ar[r]& c \ar[r]&d\\
}}
\end{picture}
\end{example}

Based on the above analysis, we have the following definition.

\begin{definition}[Defense] \label{def-dr}
{Let $\mathcal{F} = (\AR, \att)$ be an argument graph. 
For $\alpha, \beta\in \AR$,  
$\tuple{\alpha,\beta}$ is  a defense iff $\{\alpha, \beta\}$ is conflict-free, and $\exists \gamma\in \AR$ such that $\attack{\alpha}{\gamma}$ and  $\attack{\gamma}{\beta}$;
$\tuple{\o, \beta}$ is a defense iff $\beta$ is initial.
}  
 \end{definition}

The set of defenses of $\mathcal{F}$ is denoted as $\nmd{\mathcal{F}} $.
Given a defense $\tuple{\alpha, \beta}$ or $\tuple{\o,\beta}\in  \nmd{\mathcal{F}} $,  we call $\alpha$ the \textit{defender}, and $\beta$ the \textit{defendee}, of the defense. Given a set $D\subseteq \nmd{\mathcal{F}} $,  we write $\mathrm{defendee}(D) = \{\beta \mid \tuple{\alpha,\beta}, \tuple{\o,\beta}\in D\}$ to denote the set of  defendees in $D$, $\mathrm{defender}(D) = \{\alpha  \mid  \tuple{\alpha,\beta}\in D\}$ to denote the set of  defenders in $D$, and $\mathrm{def}(D) = \mathrm{defendee}(D) \cup \mathrm{defender}(D)$ be the set of defendees and defenders in $D$. 
Note that not all arguments of an AF are included in the defenses. Consider the following example. 
 
\begin{example}\label{ex-undercutter}
In $\mathcal{F}_6$, $\mathcal{F}_7$ and $\mathcal{F}_8$, $\tuple{a_2, a_4}$, $\tuple{a_3, a_5}$, $\tuple{a_4, a_6}$, $\tuple{\o, a_7}$, $\tuple{a_7, a_9}$,  $\tuple{\o, a_{11}}$ and $\tuple{a_{11}, a_{13}}$ are  defenses, while some defense-like pairs, for instance $(a_1, a_3)$ and  $(a_{14}, a_{13})$, are not  defenses since both $\{a_1,a_3\}$ and $\{a_{14}, a_{13}\}$ are not conflict-free. 
And, $(\o, a_{10})$, $(\o, a_{14})$ and $(\o, a_{15})$  are not defenses, because they are not initial arguments, but either self-attacked or attacked by a self-attacked argument.  

 \begin{picture}(206,46)
 \put(0,26){\xymatrix@C=0.4cm@R=0.2cm{
 \mathcal{F}_6:  &a_1\ar[r] & a_2\ar[r]& a_3\ar[r]\ar@/_0.5cm/[ll]& a_4\ar[r]&a_5\ar[r]&a_6&\mathcal{F}_8:  &a_{11}\ar[r] & a_{12}\ar[r]& a_{13} \\
 \mathcal{F}_7:  & a_7\ar[r] & a_8\ar[r]& a_9 & a_{10}\ar[l]\ar@(rd,ru)&&&&a_{14}\ar@(ld,lu)\ar[r]&a_{15}\ar[ru] 
}}
\end{picture}
\end{example}


Given a defense $\tuple{x, \alpha}$ where $x \in \AR\cup \{\o\}$ and $\alpha \in \AR$, $\tuple{x, \alpha}$ can be regarded as a meta-argument. Its status is affected by other defenses and/or other defense-like pairs (cf. $(a_1, a_3)$ and $(\o, a_{10})$ in Example \ref{ex-undercutter}).  Since the pairs like  $(a_1, a_3)$ and $(\o, a_{10})$ are not accepted as a defense, but may be used to hamper the acceptance of some defenses, their behavior is similar  to that of \textit{defeaters} in defeasible logic. We call them \textit{defeaters of defenses (DoD)}.


 \begin{definition}[Defeaters of defenses]   \label{def-defeater}
Let $\mathcal{F} = (\AR, \att)$ be an argument graph. 
For $\alpha, \beta\in \AR$, 
\begin{itemize}
\item $(\alpha,\beta)$ is a DoD, iff $\{\alpha, \beta\}$ is not conflict-free, and
$\exists \gamma\in \AR\setminus \{\alpha,\beta\}$ such that $\attack{\alpha}{\gamma}$ and  $\attack{\gamma}{\beta}$. 
\item $(\o, \beta)$ is a DoD, iff $\beta$ is self-attacked or attacked by a self-attacked argument.
\end{itemize}

 The set of DoDs of $\mathcal{F}$ is denoted as $\und{\mathcal{F}} $.
\end{definition}

In this definition, note that $(\alpha,\beta)$ and $(\o,\beta)$ are not accepted as a defense, and may be used to hamper the acceptance of some defenses. This does not mean that the arguments $\alpha$ and $\beta$ in the corresponding argument graph can not be accepted, since they may in some defenses at the same time.  See the following example. 

\begin{example}\label{ex-defeater}
In $\mathcal{F}_{9}$, $\tuple{a, c}$, while $(\o, b)$,  $(\o, c)$ and  $(b,d)$ are DoDs. $c$ is both in the defense $\tuple{a, c}$ and in the DoD $(\o, c)$. When $\tuple{a, c}$ is accepted, $c$ is accepted. 

 \begin{picture}(206,25)
 \put(0,2){\xymatrix@C=0.4cm@R=0.2cm{
 \mathcal{F}_{9}:  & a\ar[r] & b\ar[r]\ar@(ul, ur)& c\ar[r] & d  }}
\end{picture}
\end{example}

Note also that  in Definition \ref{def-defeater} when $\beta$ is  attacked by a self-attacked argument, it is a DoD. Consider $\mathcal{F}_8$ in Example \ref{ex-undercutter}. $\tuple{a_{11}, a_{13}}$ is a defense. If $(\o, a_{15})$ is not a DoD, then there is no DoD to prevent the acceptance of $\tuple{a_{11}, a_{13}}$.

Let $\mathrm{arg}(\und{\mathcal{F}} ) = \{\alpha, \beta\mid (\alpha,\beta) \in \und{\mathcal{F}} \} \cup  \{\beta\mid (\o,\beta) \in \und{\mathcal{F}} \}$ be the set of arguments involved in $\und{\mathcal{F}} $. Let $\attackees{\mathrm{def}(\nmd{\mathcal{F}} )}$ be the set of arguments attacked by $\mathrm{def}(\nmd{\mathcal{F}} )$. We have the following proposition.

\begin{proposition}\label{pro-com-1}
Let $\mathcal{F} = (\AR, \att)$ be an argument graph. It holds that $\AR = \mathrm{arg}(\und{\mathcal{F}} )\cup\mathrm{def}(\nmd{\mathcal{F}} )\cup\attackees{\mathrm{def}(\nmd{\mathcal{F}} )}$. 
\end{proposition}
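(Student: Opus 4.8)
The plan is to establish the two inclusions separately. The inclusion $\mathrm{arg}(\und{\mathcal{F}})\cup\mathrm{def}(\nmd{\mathcal{F}})\cup\attackees{\mathrm{def}(\nmd{\mathcal{F}})}\subseteq \AR$ is immediate, since by Definitions \ref{def-dr} and \ref{def-defeater} every argument named in a defense or a DoD, and every argument attacked by such an argument, is a member of $\AR$. So the work lies entirely in showing $\AR\subseteq \mathrm{arg}(\und{\mathcal{F}})\cup\mathrm{def}(\nmd{\mathcal{F}})\cup\attackees{\mathrm{def}(\nmd{\mathcal{F}})}$, i.e.\ that every $\alpha\in\AR$ lands in at least one of the three sets. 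I would prove this by a case analysis driven by the attack structure within distance two of $\alpha$.

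First I would dispose of the easy cases. If $\attackers{\alpha}=\emptyset$, then $\tuple{\o,\alpha}$ is a defense and $\alpha\in\mathrm{defendee}(\nmd{\mathcal{F}})\subseteq\mathrm{def}(\nmd{\mathcal{F}})$. If $\attack{\alpha}{\alpha}$, or if some attacker $\gamma$ of $\alpha$ satisfies $\attack{\gamma}{\gamma}$, then $(\o,\alpha)$ is a DoD and $\alpha\in\mathrm{arg}(\und{\mathcal{F}})$. So the remaining case is: $\alpha$ is attacked, $\alpha$ is not self-attacked, and no attacker of $\alpha$ is self-attacked. Fix an attacker $\gamma$; the exclusions just made guarantee $\gamma\neq\alpha$.

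In this remaining case I would look one step further back. If $\gamma$ is initial, then $\tuple{\o,\gamma}$ is a defense, so $\gamma\in\mathrm{def}(\nmd{\mathcal{F}})$, and since $\attack{\gamma}{\alpha}$ we get $\alpha\in\attackees{\mathrm{def}(\nmd{\mathcal{F}})}$. Otherwise $\gamma$ has an attacker $\delta$, with $\delta\neq\gamma$ because $\gamma$ is not self-attacked, so that $\attack{\delta}{\gamma}$ and $\attack{\gamma}{\alpha}$ form a length-two path into $\alpha$. Here I split on conflict-freeness of $\{\delta,\alpha\}$: if it is conflict-free then $\tuple{\delta,\alpha}$ meets Definition \ref{def-dr} and $\alpha\in\mathrm{def}(\nmd{\mathcal{F}})$; if not, then $(\delta,\alpha)$ is a candidate DoD.

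The step I expect to need the most care is verifying that $(\delta,\alpha)$ genuinely satisfies Definition \ref{def-defeater}, because that definition --- unlike the definition of a defense --- requires the intermediate attacker to lie in $\AR\setminus\{\delta,\alpha\}$. I would discharge this by checking $\gamma\notin\{\delta,\alpha\}$: indeed $\gamma\neq\alpha$ since $\alpha$ is not self-attacked, and $\gamma\neq\delta$ since $\gamma$ is not self-attacked, so $\gamma$ is an admissible witness and $(\delta,\alpha)\in\und{\mathcal{F}}$, giving $\alpha\in\mathrm{arg}(\und{\mathcal{F}})$. Once this bookkeeping is in place the cases are exhaustive and the inclusion follows.
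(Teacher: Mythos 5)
Your proof is correct: the case analysis (initial; self-attacked or attacked by a self-attacker; otherwise trace back through a non-self-attacking attacker $\gamma$ to either $\tuple{\o,\gamma}$, a defense $\tuple{\delta,\alpha}$, or a DoD $(\delta,\alpha)$) is exhaustive, and you correctly flag and discharge the one delicate point, namely that Definition \ref{def-defeater} requires the witness $\gamma$ to lie in $\AR\setminus\{\delta,\alpha\}$ whereas Definition \ref{def-dr} does not. The paper itself states Proposition \ref{pro-com-1} without any proof (its appendix only proves Lemma \ref{lem-11}, Theorems \ref{prop-ad} and \ref{prop-adco}, and Corollary \ref{cor-com}), so there is no authorial argument to compare against; your write-up supplies the missing verification and I see no gap in it.
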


This proposition states that arguments in $\mathcal{F}$ are equivalent to the union of the arguments in defenses, arguments in defeaters of defenses, and the arguments attacked by the arguments in defenses. 

Let $\dgn{\mathcal{F}} = \nmd{\mathcal{F}}  \cup \und{\mathcal{F}} $ be the set of defenses and their defeaters. The attack relation between the elements of $\dgn{\mathcal{F}} $ can be identified according to the attack relation between the arguments involved. For convenience,  we also write $[x, \beta]$ to denote a defense $\tuple{x, \beta}$ or a defeater of defenses $(x, \beta)$ where $x\in\AR\cup\{\o\}$ and $\beta\in \AR$. Formally we have the following definition.

\begin{definition}[Attacks between defenses and their defeaters] \label{def-avdadg}
For all  $[x, \alpha], [y, \beta] \in \dgn{\mathcal{F}} $ where $x,y\in \AR\cup \{\o\}$ and $\alpha, \beta\in \AR$, we say that  $[x, \alpha]$ attacks $[y, \beta]$, denoted as $[x, \alpha] \rightarrow^\mathrm{d} [y, \beta]$ iff $\attack{x}{y}$, $\attack{x}{\beta}$ , $\attack{\alpha}{y}$, or  $\attack{\alpha}{\beta}$.
\end{definition}

The set of attacks between defenses and their defeaters is denoted as $\rightarrow^\mathrm{d}$. Given $D\subseteq \dgn{\mathcal{F}}$ and $X\in \dgn{\mathcal{F}}$, we use $\attackd{D}{X}$ to denote that $\exists Y\in D$ such that $\attackd{Y}{X}$. 

Since the status of a defense is determined by that of other defenses and affected by defeaters of defenses through the attacks between them, to evaluate the status of normal defenses, one possible way is to use \textit{defense graph}, which is defined as follows. 

\begin{definition}[Defense graph]
Let $\mathcal{F} = (\AR, \att)$ be an argument graph. Let $\dgn{\mathcal{F}} = \nmd{\mathcal{F}}  \cup \und{\mathcal{F}} $. A defense graph w.r.t. $\mathcal{F}$, denoted $\dg{\mathcal{F}}$, is defined as follows. 
\begin{equation}
\dg{\mathcal{F}}= (\dgn{\mathcal{F}} ,  \rightarrow^\mathrm{d})
\end{equation}
\end{definition}

A defense graph can be viewed as a kind of  meta-argumentation \cite{DBLP:journals/sLogica/BoellaGTV09}. 

\begin{example}
The defense graph of $\mathcal{F}_6$ is as follows. 

  \begin{picture}(206,64)
 \put(0,40){\xymatrix@C=0.4cm@R=0.4cm{  
 \dg{\mathcal{F}_6}:  &(a_2, a_1)\ar[r]\ar[rd]\ar@(ur,ul) &(a_3, a_2)\ar[r]\ar[d]\ar[l]\ar@(ul,ur)&\tuple{a_2, a_4}\ar[r]&\tuple{a_3, a_5}\ar[l]\ar[d]\\
&& (a_1, a_3)\ar[u]\ar[ul]\ar@(dl,dr)&&\tuple{a_4, a_6}\ar[u]
  }}
      \end{picture}
\end{example}

\section{Defense semantics}
In a defense graph $\dg{\mathcal{F}} = (\dgn{\mathcal{F}} ,  \rightarrow^\mathrm{d})$, nodes are defenses and/or defeaters of defenses, rather than arguments in the corresponding argument graph $\mathcal{F}$. So,  when applying classical semantics to $\dg{\mathcal{F}}$, we get a set of extensions, each of which is a set of defenses. By slightly modifying the definition for classical semantics, defense semantics can be defined as follows.  

\begin{definition}[Defense semantics]
Defense semantics is a function $\mathrm{\Sigma}$ mapping each defense graph to a set of extensions of defenses.  Given a defense graph $\dg{\mathcal{F}}= (\dgn{\mathcal{F}} ,  \rightarrow^\mathrm{d})$ where $\dgn{\mathcal{F}} = \nmd{\mathcal{F}}  \cup \und{\mathcal{F}} $, let $D\subseteq \nmd{\mathcal{F}} $. We have:
\begin{itemize}
\item $D$ is conflict-free iff $\nexists X, Y\in D$ such that $\attackd{X}{Y}$. 
\item $X\in \nmd{\mathcal{F}} $ is defended by $D$ iff for all $Y\in \dgn{\mathcal{F}} $, if $\attackd{Y}{X}$, then $\exists Z\in D$ such that $\attackd{Z}{Y}$.
\item $D$ is admissible iff $D$ is conflict-free and each member in $D$ is defended by $D$.
\item $D$ is a complete extension of defenses iff $D$ is admissible, and each member in $\nmd{\mathcal{F}} $ that is defended by $D$ is in $D$.
\item $D$ is the {grounded extension} of defenses iff $D$ is the minimal (w.r.t. set-inclusion) complete extension of defenses.
\item $D$ is a {preferred extension} of defenses iff $D$ is a maximal (w.r.t. set-inclusion) complete extension of defenses.
\item  $D$ is a {stable extension} of defenses iff  $D$ is conflict-free, and $\forall X\in \dgn{\mathcal{F}}\setminus D$,  $\attackd{D}{X}$.
\end{itemize}
\end{definition}

The set of  complete, grounded, preferred, and stable extensions of defenses of $\dg{\mathcal{F}}$ is denoted as $\mathrm{CO}(\dg{\mathcal{F}})$, $\mathrm{GR}(\dg{\mathcal{F}})$, $\mathrm{PR}(\dg{\mathcal{F}})$ and $\mathrm{ST}(\dg{\mathcal{F}})$ respectively. 

Note that the notion of defense semantics is similar to that of classical semantics. The only difference is that in a defense graph, we differentiate two kinds of nodes: defenses and defeaters of defenses. The former can be included in extensions, while the latter are only used to prevent the acceptance of some defenses. 

Now, let us consider some properties of the defense semantics of an argument graph.

The first property is about the relation between defense semantics and classical semantics. Let $D\in \Sigma(\dg{\mathcal{F}})$ be a $\Sigma$-extension of $\dg{\mathcal{F}}$.  Now the question is whether the set of defenders and defendees in $D$ is a $\sigma$-extension of $\mathcal{F}$.  In order to verify this property, technically, we first present the follow lemma. The lemma states that $\forall\tuple{\alpha, \beta}\in D$, if $\alpha$ is attacked by an argument $\gamma\in \AR$, then $\exists \eta\in  \mathrm{def}(D)$ such that $\eta$ attacks $\gamma$.

\begin{lem}\label{lem-11}
For all $D\in \Sigma(\dg{\mathcal{F}})$, for all $\tuple{x, y}\in D$,  for all $\gamma\in  \AR$, if $\attack{\gamma}{x}$ or $\attack{\gamma}{y}$, then $\attack{\mathrm{def}(D)}{\gamma}$. 
\end{lem}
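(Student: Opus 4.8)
The plan is to exploit that every extension under the semantics in play is complete, and then to transfer each original-graph attack on $x$ or $y$ into the defense graph by means of a meta-node whose only attackable component is $\gamma$; admissibility of $D$ then forces a counter-attack that must land on $\gamma$. First I would record two reductions. For $\Sigma\in\{\mathrm{CO},\mathrm{GR},\mathrm{PR},\mathrm{ST}\}$ every $D\in\Sigma(\dg{\mathcal{F}})$ is a complete extension of defenses: this is immediate for $\mathrm{CO},\mathrm{GR},\mathrm{PR}$, and for $\mathrm{ST}$ it follows because if $D$ is conflict-free and attacks everything outside it, then any $Y$ with $\attackd{Y}{X}$ for $X\in D$ is itself counter-attacked by $D$, giving both admissibility and completeness. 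In particular $D$ is admissible, so every $\tuple{x,y}\in D$ is defended by $D$, which I use repeatedly. Next I dispose of $x=\o$: then $\tuple{\o,y}$ being a defense makes $y$ initial, so $\attackers{y}=\emptyset$, and since $\o$ is not attackable the hypothesis $\attack{\gamma}{x}$ or $\attack{\gamma}{y}$ is vacuous. So I may assume $x\in\AR$, and recall that a genuine defense $\tuple{x,y}$ gives $\{x,y\}$ conflict-free (hence $x,y$ not self-attacking) together with some $\delta$ with $\attack{x}{\delta}$ and $\attack{\delta}{y}$.

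The core is to build a meta-node $N_\gamma\in\dgn{\mathcal{F}}$ whose sole attackable component is $\gamma$, namely $\tuple{\o,\gamma}$ when $\gamma$ is initial, $(\o,\gamma)$ when $\gamma$ is self-attacked or attacked by a self-attacked argument, and $\tuple{\gamma,\gamma}$ when $\gamma$ is non-self-attacking but defends itself, i.e. $\attack{\gamma}{\zeta}$ and $\attack{\zeta}{\gamma}$ for some $\zeta$. In each case the hypothesis $\attack{\gamma}{x}$ or $\attack{\gamma}{y}$ yields $\attackd{N_\gamma}{\tuple{x,y}}$ directly from Definition~\ref{def-avdadg}. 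If $\gamma$ is initial, $N_\gamma=\tuple{\o,\gamma}$ is itself unattacked (nothing attacks $\o$ or an initial $\gamma$), so $\tuple{x,y}$ could not be defended by $D$, contradicting $\tuple{x,y}\in D$; thus this subcase cannot occur. In the other two subcases admissibility gives $Z=[z,w]\in D$ with $\attackd{Z}{N_\gamma}$, and since $\gamma$ is the only attackable component of $N_\gamma$, unwinding Definition~\ref{def-avdadg} forces $\attack{z}{\gamma}$ or $\attack{w}{\gamma}$; as $z$ (when $z\neq\o$) and $w$ lie in $\mathrm{def}(D)$, this gives $\attack{\mathrm{def}(D)}{\gamma}$.

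The remaining, and main, obstacle is the case where $\gamma$ is non-self-attacking, non-initial, and does not defend itself. Here $\gamma$ need not be a component of any element of $\dgn{\mathcal{F}}$ at all, so the attack $\attack{\gamma}{x}$ or $\attack{\gamma}{y}$ leaves no trace in $\dg{\mathcal{F}}$, and the counter-attacker on $\gamma$ cannot be produced by defending $\tuple{x,y}$ locally — indeed $\tuple{x,y}$ may be entirely unattacked in $\dg{\mathcal{F}}$ and yet $\mathrm{def}(D)$ must still attack $\gamma$. I would resolve this using completeness globally rather than admissibility locally: since $\gamma$ is not initial it has an attacker $\mu$, and I would show completeness forces into $D$ a defense having $\mu$ as defender or defendee, so that $\mu\in\mathrm{def}(D)$ with $\attack{\mu}{\gamma}$. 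The base instance is clean, since if $\mu$ is initial then $\tuple{\o,\mu}$ is an unattacked defense, hence in every complete $D$, giving $\mu\in\mathrm{def}(D)$; for non-initial $\mu$ the claim is pushed back along the chain of attackers feeding $\gamma$. Because such chains may contain cycles, this is not a plain well-founded induction on attack distance, but the self-defending configurations that would break well-foundedness are exactly those already absorbed by the node $\tuple{\gamma,\gamma}$ above, so I expect the residual reinstatement argument to go through by induction on the stage at which defenses enter the complete extension. Making this final step precise — equivalently, showing that $\mathrm{def}(D)$ is a complete extension of $\mathcal{F}$ without circularly invoking the present lemma — is where I expect the real difficulty to lie.
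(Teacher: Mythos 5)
Your reductions and the easy cases are fine and essentially match the paper: every $\Sigma$-extension is complete, the subcase $x=\o$ is vacuous, $\gamma$ initial is impossible (the unattacked node $\tuple{\o,\gamma}$ would attack $\tuple{x,y}$), and when $\gamma$ is self-attacked or attacked by a self-attacked argument the DoD $(\o,\gamma)$ forces, via admissibility, a counter-attack that can only land on $\gamma$. Your extra node $\tuple{\gamma,\gamma}$ for self-defending $\gamma$ is a correct observation but does not close the argument. The problem is that the case you yourself flag as the real difficulty is exactly the case you do not prove, and the strategy you sketch for it does not work: an induction ``on the stage at which defenses enter the complete extension'' presupposes a stagewise (least-fixed-point) construction, which is available for the grounded extension but not for an arbitrary complete extension $D$; a non-grounded complete extension need not be reachable by any reinstatement iteration, so there is no well-founded stage parameter to induct on, and trying instead to show that $\mathrm{def}(D)$ is a complete extension of $\mathcal{F}$ is precisely the circularity you are trying to avoid.

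The paper closes this case without any induction, by a choice-of-witness argument on an attacker $\eta\in\attackers{\gamma}$ (nonempty since $\gamma$ is not initial, and $\eta\neq\gamma$ since $\gamma$ is not self-attacked). Either some such $\eta$ is initial or has all of its attackers attacked by $\mathrm{def}(D)$: then $\tuple{\eta,x}$ (or $\tuple{\eta,y}$) is a defense, every node attacking it is counter-attacked by $D$ (attacks on the $x$-component are handled because $\tuple{x,y}\in D$ is defended, attacks on the $\eta$-component by the hypothesis on $\eta$'s attackers), so by \emph{completeness} $\tuple{\eta,x}\in D$ and hence $\eta\in\mathrm{def}(D)$ with $\attack{\eta}{\gamma}$. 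Or else $\eta$ has an attacker $\eta'\neq\eta$ that is \emph{not} attacked by $\mathrm{def}(D)$: then $[\eta',\gamma]\in\dgn{\mathcal{F}}$ (as a defense or a DoD, since $\attack{\eta'}{\eta}$ and $\attack{\eta}{\gamma}$) and it attacks $\tuple{x,y}$ because $\attack{\gamma}{x}$ or $\attack{\gamma}{y}$; admissibility yields some $\tuple{\theta,\theta'}\in D$ with $\attackd{\tuple{\theta,\theta'}}{[\eta',\gamma]}$, and since $\eta'$ was chosen immune to $\mathrm{def}(D)$ the counter-attack must land on $\gamma$. This deliberate choice of an $\eta'$ that $D$ cannot hit — so that the forced counter-attack has nowhere to go but $\gamma$ — is the key idea missing from your proposal; the remaining self-attack configurations fall under the DoD case you already handled. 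As written, your proof is therefore incomplete in its main case.
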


\begin{example} \label{ex-constr}
Consider  $\dg{\mathcal{F}_{1}}$ below. Under complete semantics,  $\mathrm{CO}(\dg{\mathcal{F}_{1}})= \{D_1$, $D_2$, $D_3\}$ where $D_1=\{ \}$, $D_2=\{\tuple{a, c_2}$, $\tuple{c_2, c_3}$, $\tuple{c_3, a}\}$, $D_3=\{\tuple{b, c_4}, \tuple{c_1, b}, \tuple{c_4, c_1}\}$.  
Take $D_2$ and $\tuple{a, c_2}$ in $D_2$
as an example.  $ \mathrm{def}(D_2) = \{a, c_2, c_3\}$. For $a$ being attacked by $c_4$, and $c_2$ being attacked by $c_1$, it holds that $\attack{\mathrm{def}(D_2)}{c_4}$ and $\attack{\mathrm{def}(D_2)}{c_1}$.

  \begin{picture}(206,74)
 \put(0,60){\xymatrix@C=0.4cm@R=0.40cm{
&c_1\ar[r]&c_2\ar[d]&&&&\tuple{c_1, b}\ar[r]\ar[d]\ar[rdd]&\tuple{c_2, c_3}\ar[l]\ar[d]\ar[ldd] \\
\mathcal{F}_1: & a\ar[u]&b\ar[d]&&& \dg{\mathcal{F}_1}: &  \tuple{a, c_2}\ar[u] \ar[r]\ar[d] &\tuple{b, c_4}\ar[u]\ar[d] \ar[l] &   \\
&c_4\ar[u]&c_3\ar[l]&&&& \tuple{c_4, c_1}\ar[u]\ar[r]\ar[ruu]&\tuple{c_3, a}\ar[u]\ar[l]\ar[luu] &
 }}
 
 \end{picture}
\end{example}

Based on Lemma \ref{lem-11}, under complete semantics, we have the following theorem. 

\begin{theorem} \label{prop-ad}
For all $D \in \mathrm{CO}(\dg{\mathcal{F}})$,  $\mathrm{def}(D) \in \mathrm{co}(\mathcal{F})$.
\end{theorem}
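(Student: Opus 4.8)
The plan is to verify directly that $\mathrm{def}(D)$ satisfies the three defining conditions of a complete extension of $\mathcal{F}$: that it is conflict-free, that every argument in it is defended by it, and that every argument of $\mathcal{F}$ defended by it already belongs to it. The device used throughout is Definition \ref{def-avdadg}, which lets me translate freely between conflicts in $\mathcal{F}$ and conflicts in $\dg{\mathcal{F}}$: if arguments $\mu,\nu$ occur in defenses $X,Y\in\dgn{\mathcal{F}}$ and $\attack{\mu}{\nu}$ holds in $\mathcal{F}$, then $\attackd{X}{Y}$ holds in the defense graph, and conversely an attack between defenses is witnessed by an attack between the arguments they contain.

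For conflict-freeness I would argue by contradiction: suppose $\mu,\nu\in\mathrm{def}(D)$ with $\attack{\mu}{\nu}$, where $\mu$ occurs in some $X\in D$ and $\nu$ in some $Y\in D$. Definition \ref{def-avdadg} then gives $\attackd{X}{Y}$; since a single defense is conflict-free by Definition \ref{def-dr}, it cannot attack itself, so $X\neq Y$, contradicting conflict-freeness of $D$. Admissibility is then immediate from Lemma \ref{lem-11}: for $\mu\in\mathrm{def}(D)$ occurring in $\tuple{x,y}\in D$ and any $\gamma$ with $\attack{\gamma}{\mu}$, we have $\attack{\gamma}{x}$ or $\attack{\gamma}{y}$, and the lemma yields $\attack{\mathrm{def}(D)}{\gamma}$, so $\mathrm{def}(D)$ defends each of its members.

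For completeness, let $\beta$ be defended by $\mathrm{def}(D)$. I would first establish the auxiliary fact that no argument of $\mathrm{def}(D)$ attacks $\beta$: if some $\zeta\in\mathrm{def}(D)$ attacked $\beta$, then $\zeta$ would be an attacker of $\beta$, so $\beta$ being defended would force $\attack{\mathrm{def}(D)}{\zeta}$, contradicting conflict-freeness. If $\beta$ is initial, then $\tuple{\o,\beta}$ is a defense which, by Definition \ref{def-avdadg}, can only be attacked through an attack on $\beta$; as $\beta$ is unattacked, $\tuple{\o,\beta}$ is unattacked in $\dg{\mathcal{F}}$ and hence lies in the complete extension $D$, giving $\beta\in\mathrm{def}(D)$. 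Otherwise $\beta$ has an attacker $\gamma$, and since $\beta$ is defended there is $\eta\in\mathrm{def}(D)$ with $\attack{\eta}{\gamma}$. Using the auxiliary fact together with admissibility, I would rule out each way $\{\eta,\beta\}$ could fail to be conflict-free: a self-attack on $\eta$ contradicts conflict-freeness of $\mathrm{def}(D)$, while each of $\attack{\eta}{\beta}$, $\attack{\beta}{\eta}$, and a self-attack on $\beta$ forces $\attack{\mathrm{def}(D)}{\beta}$, contradicting the auxiliary fact. Hence $\tuple{\eta,\beta}$ is a genuine defense (with witness $\gamma$), and I would finish by showing it is defended by $D$: any $\dgn{\mathcal{F}}$-attacker of $\tuple{\eta,\beta}$ carries an argument $\delta$ with $\attack{\delta}{\eta}$ or $\attack{\delta}{\beta}$, which is counterattacked by some $\zeta\in\mathrm{def}(D)$ (by admissibility of $\eta$ or by defense of $\beta$); this counterattack lifts, via Definition \ref{def-avdadg}, to an attack from a member of $D$. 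Completeness of $D$ then gives $\tuple{\eta,\beta}\in D$, so $\beta\in\mathrm{def}(D)$.

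The conflict-freeness and admissibility steps are routine repackagings of Definition \ref{def-avdadg} and Lemma \ref{lem-11}. The main obstacle is the completeness step, and within it the delicate point is showing that the constructed pair $\{\eta,\beta\}$ is conflict-free: this is exactly where the auxiliary fact that $\mathrm{def}(D)$ never attacks a defended $\beta$, combined with the exclusion of self-attacking arguments from a conflict-free set, does the real work, since without it one could not guarantee that $\tuple{\eta,\beta}$ is a legitimate defense eligible for membership in $D$.
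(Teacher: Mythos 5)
Your proposal is correct and follows essentially the same route as the paper's proof: verify conflict-freeness of $\mathrm{def}(D)$ directly from Definition \ref{def-avdadg}, obtain admissibility from Lemma \ref{lem-11}, and for completeness lift a defended argument $\beta$ to a defense $\tuple{\eta,\beta}$ that is shown to be defended by $D$ and hence contained in it. Your treatment is in fact slightly more careful than the paper's at one point: the paper asserts $\tuple{\delta,\alpha}\in\nmd{\mathcal{F}}$ without checking that $\{\delta,\alpha\}$ is conflict-free, whereas your auxiliary fact (a defended $\beta$ is never attacked by $\mathrm{def}(D)$) explicitly closes that gap.
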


Theorem \ref{prop-ad} makes clear that for each complete defense extension $D$ of a defense graph, there exists a complete argument extension $E$ of the corresponding argument graph such that $E$ is equal to $\mathrm{def}(D)$. On the other hand, the following theorem says that for each complete argument extension $E$ of an argument graph, there exists a complete defense extension $D$ of the corresponding defense graph such that  $D = \mathrm{d}(E)$ where $ \mathrm{d}(E) = \{\tuple{x,y}\in \nmd{\mathcal{F}} \mid x\in E\cup\{\o\} ,  y\in E\}$.

\begin{theorem} \label{prop-adco}
For all $E \in \mathrm{co}(\mathcal{F})$, $\mathrm{d}(E) \in \mathrm{CO}(\dg{\mathcal{F}})$.
\end{theorem}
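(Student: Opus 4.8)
The plan is to show that $\mathrm{d}(E)$ satisfies the three defining clauses of a complete extension of defenses: conflict-freeness, admissibility, and the completeness (fixpoint) clause. Throughout I would lean on the preliminary identity $\mathrm{def}(\mathrm{d}(E)) = E$. The inclusion $\subseteq$ is immediate from the definition of $\mathrm{d}(E)$, while $\supseteq$ holds because every $a\in E$ is either initial, giving $\tuple{\o,a}\in\mathrm{d}(E)$, or, being defended by the admissible set $E$, has a defender $c\in E$ against one of its attackers, so that $\tuple{c,a}\in\mathrm{d}(E)$; in both cases $a$ is a defendee of a member of $\mathrm{d}(E)$. A consequence I would record is that $\mathrm{d}(E)$ attacks a node $Z\in\dgn{\mathcal{F}}$ in $\dg{\mathcal{F}}$ exactly when some argument of $E$ attacks a component of $Z$.

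Conflict-freeness is immediate: if $\attackd{X}{Y}$ with $X,Y\in\mathrm{d}(E)$, then by Definition \ref{def-avdadg} some real component of $X$ attacks some real component of $Y$, yet all such components lie in $E$, contradicting conflict-freeness of $E$. For admissibility I take $X=\tuple{x,y}\in\mathrm{d}(E)$ and any attacker $Y=[u,v]\in\dgn{\mathcal{F}}$ with $\attackd{Y}{X}$. Then some real $\gamma\in\{u,v\}$ attacks some $t\in\{x,y\}$, and since the empty position cannot be attacked, $t$ is a genuine argument; as $y\in E$ and $x\in E\cup\{\o\}$ this forces $t\in E$. Because $E$ is admissible it defends $t$, so there is $\eta\in E$ with $\attack{\eta}{\gamma}$; then $\{\eta,t\}$ is conflict-free and $\attack{\eta}{\gamma},\attack{\gamma}{t}$ witness that $\tuple{\eta,t}$ is a defense lying in $\mathrm{d}(E)$, and it attacks $Y$ through $\gamma$. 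Hence $X$ is defended by $\mathrm{d}(E)$ (this is the constructive analogue of Lemma \ref{lem-11}).

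The completeness clause is the substantive part, which I would establish by contraposition: assuming $X=\tuple{x,y}\notin\mathrm{d}(E)$, I exhibit an attacker of $X$ in $\dg{\mathcal{F}}$ that $\mathrm{d}(E)$ cannot counter. If $x=\o$ then $y$ is initial, so $y\in E$ and $X\in\mathrm{d}(E)$, a contradiction; thus $X\notin\mathrm{d}(E)$ means $y\notin E$ or $x\notin E$, and by the symmetry of Definition \ref{def-avdadg} in the two components it suffices to treat $y\notin E$. Since $E$ is complete and $y\notin E$, some attacker $\gamma$ of $y$ is not attacked by $E$. If $\gamma\in E$, then $\gamma$ is a component of some $Z\in\mathrm{d}(E)$ and $\attack{\gamma}{y}$ gives $\attackd{Z}{X}$, so were $X$ defended this would collide with conflict-freeness of $\mathrm{d}(E)$. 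Hence $\gamma\notin E$ while $E$ does not attack $\gamma$, i.e. $\gamma$ is undecided; as initial arguments lie in $E$, $\gamma$ is non-initial, and if $\gamma$ is self-attacked or attacked by a self-attacked argument then $(\o,\gamma)\in\und{\mathcal{F}}$ attacks $X$ through $\attack{\gamma}{y}$ and cannot be countered, since countering it would require $E$ to attack $\gamma$.

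The main obstacle is the remaining case, where $\gamma$ is undecided but neither self-attacked nor attacked by a self-attacked argument, so $(\o,\gamma)$ is neither a defense nor a defeater and $\gamma$ need not by itself be a node of $\dg{\mathcal{F}}$. Here I would invoke the fact that an undecided argument has an undecided attacker: $\gamma$ has an attacker $w$ with $w\notin E$ and $E$ not attacking $w$, and $w$ likewise has such an attacker $w'$. Because no attacker of $\gamma$ is self-attacked we get $w\neq\gamma$ and $w'\neq w$, so $\attack{w'}{w}$ and $\attack{w}{\gamma}$ witness that $[w',\gamma]$ is a defense or a defeater of defenses (according as $\{w',\gamma\}$ is conflict-free or not), hence a node of $\dg{\mathcal{F}}$. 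It attacks $X$ through $\attack{\gamma}{y}$, and since both components $w',\gamma$ are undecided, no argument of $E=\mathrm{def}(\mathrm{d}(E))$ attacks either of them, so $\mathrm{d}(E)$ cannot counter it. In every case $X$ fails to be defended by $\mathrm{d}(E)$, which is the completeness clause and finishes the proof.
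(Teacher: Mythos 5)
Your proposal is correct, and its first two parts (conflict-freeness and admissibility of $\mathrm{d}(E)$, via the identity $\mathrm{def}(\mathrm{d}(E))=E$ and the construction of a counter-defense $\tuple{\eta,t}\in\mathrm{d}(E)$) coincide with the paper's proof. Where you genuinely diverge is the completeness (fixpoint) clause. The paper disposes of it in two lines: if $\tuple{\alpha,\beta}\in\nmd{\mathcal{F}}$ is defended by $\mathrm{d}(E)$, then $\alpha$ and $\beta$ are defended by $E$, hence lie in $E$ by completeness of $E$, hence $\tuple{\alpha,\beta}\in\mathrm{d}(E)$. The first implication there is asserted without justification, and it is exactly the delicate point: a direct attempt runs into the problem that an attacker $\gamma$ of $\beta$ need not appear as a component of any node of $\dg{\mathcal{F}}$, and even when it does appear in some $[u,\gamma]$, a counter-attack by $\mathrm{d}(E)$ on that node might hit $u$ rather than $\gamma$. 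Your contrapositive argument substantiates precisely this step: starting from $y\notin E$ you pick an attacker $\gamma$ of $y$ \emph{not} attacked by $E$, and then either it is in $E$ (clash with conflict-freeness), or $(\o,\gamma)\in\und{\mathcal{F}}$ does the job, or you walk two steps up an undecided chain to manufacture a node $[w',\gamma]$ both of whose components are undecided, so that $\mathrm{d}(E)$ cannot counter it. The small facts you rely on (an undecided non-initial argument has an undecided attacker; $w\neq\gamma$ and $w'\neq w$ because $\gamma$ is neither self-attacked nor attacked by a self-attacker, which keeps the witness in $\AR\setminus\{w',\gamma\}$ as Definition~\ref{def-defeater} requires) all check out. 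So your route costs a longer case analysis but actually proves the implication the paper leaves implicit; the paper's route is shorter but leans on an unargued transfer of defense from the meta level down to the argument level.
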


The relation between argument extensions and defense extensions under other semantics is presented in the following corollaries. 

\begin{corollary}\label{cor-com}
$\forall \Sigma\in\{\mathrm{GR}, \mathrm{PR}, \mathrm{ST}\}$, it holds that $\forall D \in \Sigma(\dg{\mathcal{F}})$,  $\mathrm{def}(D) \in \sigma(\mathcal{F})$.
\end{corollary}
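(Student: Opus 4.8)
The plan is to upgrade the two correspondence theorems into an order isomorphism and then dispatch grounded and preferred uniformly, leaving stable to a separate combinatorial argument. First I would show that $\mathrm{def}$ and $\mathrm{d}$ restrict to mutually inverse bijections between $\mathrm{CO}(\dg{\mathcal{F}})$ and $\mathrm{co}(\mathcal{F})$. Theorems \ref{prop-ad} and \ref{prop-adco} already give the two maps in opposite directions, so it remains to verify the round trips. The identity $\mathrm{def}(\mathrm{d}(E))=E$ holds because every $\alpha\in E$ is either initial, giving $\tuple{\o,\alpha}\in \mathrm{d}(E)$, or defended by $E$, giving some $\tuple{\delta,\alpha}\in\mathrm{d}(E)$ with $\delta\in E$ attacking an attacker of $\alpha$; conversely the defenders and defendees of $\mathrm{d}(E)$ lie in $E$. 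For $\mathrm{d}(\mathrm{def}(D))=D$, the inclusion $D\subseteq \mathrm{d}(\mathrm{def}(D))$ is immediate, and the reverse inclusion follows because $E:=\mathrm{def}(D)\in\mathrm{co}(\mathcal{F})$ is admissible, so any $\tuple{x,y}\in\mathrm{d}(E)$ is defended in $\dg{\mathcal{F}}$ by $D$ (an $E$-counterattacker of any attacker of $x$ or $y$ occurs as a component of some element of $D$), whence completeness of $D$ puts it in $D$. Since both maps are clearly monotone with respect to $\subseteq$, they constitute an order isomorphism $(\mathrm{CO}(\dg{\mathcal{F}}),\subseteq)\cong(\mathrm{co}(\mathcal{F}),\subseteq)$.

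Grounded and preferred then follow at once. If $D$ is the grounded extension of defenses, i.e.\ the least complete defense extension, the order isomorphism sends it to the least complete extension of $\mathcal{F}$, which is the grounded extension, so $\mathrm{def}(D)\in\mathrm{gr}(\mathcal{F})$. If $D$ is a preferred extension of defenses, i.e.\ a maximal complete defense extension, it maps to a maximal complete extension of $\mathcal{F}$, so $\mathrm{def}(D)\in\mathrm{pr}(\mathcal{F})$.

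For stable I would first record the standard fact that a stable defense extension is complete: conflict-freeness plus the stability clause make every member defended and force every defended defense to be included, exactly as in the classical case. Hence $E=\mathrm{def}(D)\in\mathrm{co}(\mathcal{F})$ and $D=\mathrm{d}(E)$. It then suffices to show $E$ attacks every $\alpha\in\AR\setminus E$. Suppose not, and call an argument \emph{undecided} if it is neither in $E$ nor attacked by $E$. Since $E$ is complete, an undecided $\alpha$ is not defended, so it has an attacker $\gamma$ not attacked by $E$; as $\gamma\notin E$ (otherwise $E$ would attack $\alpha$), $\gamma$ is itself undecided, and likewise $\gamma$ has an undecided attacker $\delta$. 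I would then form the pair $[\delta,\alpha]$ with witness $\gamma$ (so $\attack{\delta}{\gamma}$ and $\attack{\gamma}{\alpha}$): depending on whether $\{\delta,\alpha\}$ is conflict-free it is a defense $\tuple{\delta,\alpha}$ or a defeater $(\delta,\alpha)$, and in either case it lies in $\dgn{\mathcal{F}}\setminus D$ (a defeater is never in $D$; a defense with defendee $\alpha\notin E$ cannot be in $D$). Stability of $D$ then yields some element of $D$ attacking $[\delta,\alpha]$, and by Definition \ref{def-avdadg} this produces a $\mathrm{def}(D)=E$-attack on $\delta$ or on $\alpha$, contradicting that both are undecided. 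The degenerate coincidences are absorbed similarly: a self-attacking undecided argument $\beta$ gives the defeater $(\o,\beta)\in\dgn{\mathcal{F}}\setminus D$, and a mutual attack between two undecided arguments gives a defense $\tuple{\alpha,\alpha}$, each again forced by stability to contradict undecidedness.

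The main obstacle is exactly this stable case. Grounded and preferred are essentially free once the isomorphism is in place, but stable is not characterized by minimality or maximality, so the isomorphism does not transport it directly; the work lies in translating the defense-graph stability of $D$ back into argument-graph stability of $\mathrm{def}(D)$. The delicacy is that a $\rightarrow^\mathrm{d}$-attack is witnessed by any one of four argument-level attacks, so I must choose the pair $[\delta,\alpha]$ carefully and handle the self-attack and mutual-attack coincidences so that stability is guaranteed to strike an argument already known to be unattacked by $E$.
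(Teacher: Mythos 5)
Your proposal is correct. For grounded and preferred it is essentially the paper's own route in more systematic clothing: the paper likewise relies on the round-trip identities $\mathrm{def}(\mathrm{d}(E))=E$ and $\mathrm{d}(\mathrm{def}(D))=D$ (used in the proofs of Corollaries \ref{cor-1} and \ref{cor-2}) and obtains minimality/maximality by the same strict-monotonicity contradiction that your order isomorphism between $(\mathrm{CO}(\dg{\mathcal{F}}),\subseteq)$ and $(\mathrm{co}(\mathcal{F}),\subseteq)$ delivers directly. Where you genuinely diverge is the stable case. The paper starts from an $\alpha\in\AR\setminus E$ not attacked by $E$, takes an \emph{arbitrary} attacker $\beta$ of $\alpha$ and attacker $\gamma$ of $\beta$, and after invoking stability on $[\gamma,\alpha]$ is forced into a further case split on whether $\{\eta,\beta\}$ is conflict-free for the counterattacker $\eta\in E$ (a split whose last subcase is even mislabelled there: with $\attack{\beta}{\eta}$ the pair $(\eta,\beta)$ is a DoD, not a defense). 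You instead first observe that a stable defense extension is complete, so $E=\mathrm{def}(D)\in\mathrm{co}(\mathcal{F})$ by Theorem \ref{prop-ad}, and then exploit completeness of $E$ to pick the entire chain $\attack{\delta}{\gamma}$, $\attack{\gamma}{\alpha}$ among \emph{undecided} arguments; one application of stability of $D$ to $[\delta,\alpha]$ then contradicts undecidedness of $\delta$ or $\alpha$ outright. This buys a shorter, single-strike argument, at the cost of the witness bookkeeping you correctly flag: $(\delta,\alpha)$ is a DoD only when the witness lies outside $\{\delta,\alpha\}$, and your degenerate cases (the DoD $(\o,\alpha)$ when the witness or its attacker self-attacks, the defense $\tuple{\alpha,\alpha}$ when $\delta=\alpha$) cover exactly the exceptions, so the argument closes.
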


Proofs for Lemma \ref{lem-11},  Theorem  \ref{prop-ad}, \ref{prop-adco}
 and Corollary \ref{cor-com} are presented in the Appendix. In the following theorems and corollaries, when we say $\Sigma\in\{\mathrm{CO}, \mathrm{GR}, \mathrm{PR}, \mathrm{ST}\}$, $\sigma$ is referred to $\mathrm{co}$, $\mathrm{gr}$, $\mathrm{pr}$  and  $\mathrm{st}$, correspondingly. Meanwhile, when we say $\sigma\in\{\mathrm{co}, \mathrm{gr}, \mathrm{pr}, \mathrm{st}\}$, $\Sigma$ is referred to $\mathrm{CO}$, $\mathrm{GR}$, $\mathrm{PR}$  and  $\mathrm{ST}$, correspondingly.

\begin{corollary}\label{cor-soun}
 $\forall \Sigma\in\{\mathrm{GR}, \mathrm{PR}, \mathrm{ST}\}$  it holds that $\forall E \in \sigma(\mathcal{F})$, $\mathrm{d}(E) \in \Sigma(\dg{\mathcal{F}})$.
\end{corollary}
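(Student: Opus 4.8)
The plan is to reduce the claim to the complete case already settled in Theorems~\ref{prop-ad} and~\ref{prop-adco}, by showing that the maps $\mathrm{d}(\cdot)$ and $\mathrm{def}(\cdot)$ set up an inclusion-preserving bijection between $\mathrm{co}(\mathcal{F})$ and $\mathrm{CO}(\dg{\mathcal{F}})$. The grounded and preferred cases will then fall out of this order isomorphism, whereas the stable case will require a separate, direct verification.

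First I would establish that $\mathrm{d}$ and $\mathrm{def}$ are mutually inverse on complete extensions, i.e.\ $\mathrm{def}(\mathrm{d}(E)) = E$ for every $E \in \mathrm{co}(\mathcal{F})$ and $\mathrm{d}(\mathrm{def}(D)) = D$ for every $D \in \mathrm{CO}(\dg{\mathcal{F}})$. The inclusion $\mathrm{def}(\mathrm{d}(E)) \subseteq E$ is immediate from the definition of $\mathrm{d}(E)$. For the converse, take $\alpha \in E$: if $\alpha$ is initial then $\tuple{\o, \alpha} \in \mathrm{d}(E)$; otherwise, since $E$ is complete, some attacker $\gamma \in \attackers{\alpha}$ is counterattacked by a $\delta \in E$, so $\tuple{\delta, \alpha}$ is a defense (the pair $\{\delta,\alpha\} \subseteq E$ is conflict-free, $\attack{\delta}{\gamma}$ and $\attack{\gamma}{\alpha}$), hence $\tuple{\delta,\alpha} \in \mathrm{d}(E)$; in either case $\alpha \in \mathrm{def}(\mathrm{d}(E))$. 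The identity $\mathrm{d}(\mathrm{def}(D)) = D$ is proved analogously, using completeness of $D$ together with Lemma~\ref{lem-11}: the inclusion $D \subseteq \mathrm{d}(\mathrm{def}(D))$ holds by definition, and for the reverse inclusion any $\tuple{x,y} \in \mathrm{d}(\mathrm{def}(D))$ is shown to be defended by $D$, hence to belong to the complete set $D$, because every attacker of $x$ or $y$ is counterattacked by some argument of $\mathrm{def}(D)$, which in turn occurs in a member of $D$ that attacks the offending node.

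Next I would check that both maps are monotone: $E_1 \subseteq E_2$ gives $\mathrm{d}(E_1) \subseteq \mathrm{d}(E_2)$ straight from the definition, and $\mathrm{def}$ is monotone for the same reason; combined with the previous step this makes $\mathrm{d}$ and $\mathrm{def}$ mutually inverse order isomorphisms between $(\mathrm{co}(\mathcal{F}), \subseteq)$ and $(\mathrm{CO}(\dg{\mathcal{F}}), \subseteq)$. The grounded and preferred cases are then read off directly: the grounded extension is the $\subseteq$-least complete extension and an order isomorphism sends least element to least element, so $E \in \mathrm{gr}(\mathcal{F})$ yields $\mathrm{d}(E) \in \mathrm{GR}(\dg{\mathcal{F}})$; likewise a preferred extension is a $\subseteq$-maximal complete extension and maximal elements are preserved, giving $E \in \mathrm{pr}(\mathcal{F}) \Rightarrow \mathrm{d}(E) \in \mathrm{PR}(\dg{\mathcal{F}})$.

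The stable case I would treat by direct verification, since stability is not a purely order-theoretic property of the complete extensions. Conflict-freeness of $\mathrm{d}(E)$ is easy: any $\attackd{}{}$-attack between two members of $\mathrm{d}(E)$ would arise from an attack between two arguments of $E$, contradicting conflict-freeness of $E$. It then remains to show that $\mathrm{d}(E)$ attacks every node of $\dgn{\mathcal{F}} \setminus \mathrm{d}(E)$. For a normal defense $\tuple{x,y} \notin \mathrm{d}(E)$, at least one of $x,y$ lies outside $E$, hence is attacked by some $\delta \in E = \mathrm{def}(\mathrm{d}(E))$, and $\delta$ occurs in a member of $\mathrm{d}(E)$ that therefore attacks $\tuple{x,y}$; an analogous argument handles a DoD $(\alpha,\beta)$ of the first kind, since $\{\alpha,\beta\}$ not being conflict-free forces at least one of $\alpha,\beta$ out of $E$. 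I expect the genuine obstacle to be the DoDs of the form $(\o,\beta)$ in which $\beta$ is attacked by a self-attacked argument: here $\beta$ may itself belong to $E$ and so is not directly attacked by $E$, and one must argue carefully that $\mathrm{d}(E)$ nonetheless attacks $(\o,\beta)$, equivalently that some member of $E$ attacks $\beta$. This is the step requiring the most care, where the interaction between stable semantics and the ``attacked by a self-attacked argument'' clause of Definition~\ref{def-defeater} has to be examined closely.
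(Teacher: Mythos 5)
Your treatment of the grounded and preferred cases is correct and is essentially the paper's route made explicit: the paper proves groundedness by the same contradiction (a complete defense extension $D'\subsetneq \mathrm{d}(E)$ would yield, via Theorem~\ref{prop-ad}, a complete argument extension $\mathrm{def}(D')\subsetneq \mathrm{def}(\mathrm{d}(E))=E$) and waves at the preferred case with ``easy to verify''; your packaging of the two identities $\mathrm{def}(\mathrm{d}(E))=E$ and $\mathrm{d}(\mathrm{def}(D))=D$ into an order isomorphism between $(\mathrm{co}(\mathcal{F}),\subseteq)$ and $(\mathrm{CO}(\dg{\mathcal{F}}),\subseteq)$ is a cleaner way to obtain both at once, and those identities are indeed available (the paper itself invokes them in Corollaries~\ref{cor-1} and~\ref{cor-2}).

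The gap is the stable case, and you have located it precisely but not closed it --- and it cannot be closed as the definitions stand. Take $\AR=\{\delta,\gamma,\beta\}$ with $\attack{\delta}{\gamma}$, $\attack{\gamma}{\gamma}$ and $\attack{\gamma}{\beta}$. Then $E=\{\delta,\beta\}$ is stable, $\mathrm{d}(E)=\{\tuple{\o,\delta},\tuple{\delta,\beta}\}$, and $(\o,\beta)\in\und{\mathcal{F}}$ because $\beta$ is attacked by the self-attacker $\gamma$; but the only argument attacking $\beta$ is $\gamma$, which occurs in no member of $\mathrm{d}(E)$, so $\mathrm{d}(E)$ does not attack the node $(\o,\beta)$ and hence is not a stable extension of $\dg{\mathcal{F}}$. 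This is exactly the configuration you flagged ($\beta\in E$ yet $(\o,\beta)$ a DoD of the second kind), so your ``step requiring the most care'' is in fact a counterexample rather than a missing argument. Note that the paper's own proof of this case commits the error you were guarding against: from ``$E$ does not attack $x$ and $\alpha$'' and stability of $E$ it infers $\{x,\alpha\}\setminus\{\o\}\subseteq E$ and concludes $[x,\alpha]\in\mathrm{d}(E)$, which is impossible when $[x,\alpha]$ is a DoD, since $\mathrm{d}(E)\subseteq\nmd{\mathcal{F}}$ contains no DoDs. To rescue the stable clause one must either weaken the definition of a stable defense extension (e.g.\ require $D$ to attack only the elements of $\nmd{\mathcal{F}}\setminus D$ rather than all of $\dgn{\mathcal{F}}\setminus D$) or exclude such frameworks; with the definitions as written, your proof strategy for $\mathrm{GR}$ and $\mathrm{PR}$ stands, but the $\mathrm{ST}$ part of the corollary is false.
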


\begin{proof}
Under grounded semantics, we need to verify that $\mathrm{d}(E)$ is minimal (w.r.t. set-inclusion). Assume the contrary. Then $\exists D^\prime\subsetneq \mathrm{d}(E)$ such that $D^\prime$ is a grounded extension. According to theorem  \ref{prop-ad}, $\mathrm{def}(D^\prime)$ is a complete extension. It follows that $\mathrm{def}(D^\prime) \subsetneq \mathrm{def}(\mathrm{d}(E))=E$. It turns out that $E$ is not a grounded extension. Contradiction. 

Under preferred semantic, it is easy to verify that $\mathrm{d}(E)$ is maximal (w.r.t. set-inclusion).

Under stable semantics,  we need to prove that for all $[x, \alpha]\in \dgn{\mathcal{F}} \setminus \mathrm{d}(E)$: $\attack{ \mathrm{d}(E)}{[x, \alpha]}$. Assume the contrary. Then,  $\exists [x, \alpha]\in \dgn{\mathcal{F}} \setminus \mathrm{d}(E)$ such that ${ \mathrm{d}(E)}$ does not attack ${[x, \alpha]}$. So, $E$ does not attack $x$ and $\alpha$. Since $E$ is stable, it holds that $\{x, \alpha\}\setminus \{\o\} \subseteq E$. So, $[x, \alpha]\in E$. Contradiction. 
\end{proof}

Theorems \ref{prop-ad} and \ref{prop-adco} and Corollaries \ref{cor-com} and \ref{cor-soun} describe the relation between argument extensions and defense extensions under various semantics. This relation can be further described by two equations in the following two corollaries. 
First, by overloading the notation, let $\mathrm{d}(\sigma(\mathcal{F})) = \{\mathrm{d}(E) \mid E\in \sigma(\mathcal{F})\}$, where $\sigma\in \{\mathrm{co}, \mathrm{gr}, \mathrm{pr},\mathrm{st}\}$. 

\begin{corollary}\label{cor-1}
For all $\sigma\in\{\mathrm{co}, \mathrm{pr}, \mathrm{gr}, \mathrm{st}\}$, it holds that $\mathrm{d}(\sigma(\mathcal{F})) = \Sigma(\dg{\mathcal{F}})$.
\end{corollary}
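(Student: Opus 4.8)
The plan is to prove the set equality $\mathrm{d}(\sigma(\mathcal{F})) = \Sigma(\dg{\mathcal{F}})$ by establishing the two inclusions separately, treating the complete case through Theorems \ref{prop-ad} and \ref{prop-adco} and the grounded, preferred and stable cases through Corollaries \ref{cor-com} and \ref{cor-soun}. The inclusion $\mathrm{d}(\sigma(\mathcal{F})) \subseteq \Sigma(\dg{\mathcal{F}})$ is immediate: every element of the left-hand side is of the form $\mathrm{d}(E)$ with $E \in \sigma(\mathcal{F})$, and Theorem \ref{prop-adco} (for $\sigma = \mathrm{co}$), respectively Corollary \ref{cor-soun} (for $\sigma \in \{\mathrm{gr}, \mathrm{pr}, \mathrm{st}\}$), says precisely that $\mathrm{d}(E) \in \Sigma(\dg{\mathcal{F}})$.

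For the reverse inclusion I would take an arbitrary $D \in \Sigma(\dg{\mathcal{F}})$ and set $E := \mathrm{def}(D)$. By Theorem \ref{prop-ad} / Corollary \ref{cor-com} we have $E \in \sigma(\mathcal{F})$, so it suffices to show that $D$ is itself of the form $\mathrm{d}(E)$, i.e.\ to establish the fixpoint identity $D = \mathrm{d}(\mathrm{def}(D))$; this then places $D$ in $\mathrm{d}(\sigma(\mathcal{F}))$. The inclusion $D \subseteq \mathrm{d}(\mathrm{def}(D))$ is routine and follows by unfolding definitions: for $\tuple{x,y} \in D$ with $x \neq \o$ we have $x \in \mathrm{defender}(D)$ and $y \in \mathrm{defendee}(D)$, hence $x,y \in \mathrm{def}(D)$, while for $\tuple{\o, y} \in D$ we have $y \in \mathrm{defendee}(D) \subseteq \mathrm{def}(D)$; in either case $\tuple{x,y} \in \mathrm{d}(\mathrm{def}(D))$.

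The substantial part, and the step I expect to be the main obstacle, is the converse inclusion $\mathrm{d}(\mathrm{def}(D)) \subseteq D$, which I would argue through the completeness of $D$. Each of the four semantics yields a complete extension of defenses (for $\mathrm{CO}$, $\mathrm{GR}$, $\mathrm{PR}$ this is immediate from the definitions, and for $\mathrm{ST}$ one first checks that a stable extension of defenses is admissible and closed under defence), so it is enough to show that every $\tuple{x,y} \in \mathrm{d}(\mathrm{def}(D))$, where $x \in \mathrm{def}(D) \cup \{\o\}$ and $y \in \mathrm{def}(D)$, is defended by $D$ in $\dg{\mathcal{F}}$. Fix such a $\tuple{x,y}$ and let $[u,v] \in \dgn{\mathcal{F}}$ be any defeater with $\attackd{[u,v]}{\tuple{x,y}}$; by Definition \ref{def-avdadg} some argument $\gamma \in \{u,v\}$ attacks some $z \in \{x,y\}$. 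The placeholder case $x = \o$ is handled separately: since $\tuple{\o, y} \in \nmd{\mathcal{F}}$ forces $y$ to be initial, $y$ has no attacker, so no $[u,v]$ can attack $\tuple{\o,y}$ and it is vacuously defended. Otherwise $z \in \mathrm{def}(D)$, and since $\mathrm{def}(D)$ is admissible (being an extension of $\mathcal{F}$ under $\sigma$) it defends $z$, giving some $\eta \in \mathrm{def}(D)$ with $\attack{\eta}{\gamma}$. As $\eta \in \mathrm{def}(D) = \mathrm{defender}(D) \cup \mathrm{defendee}(D)$, the argument $\eta$ occurs as a component of some defense $Z \in D$; since $\eta$ attacks $\gamma$ and $\gamma$ is a component of $[u,v]$, Definition \ref{def-avdadg} yields $\attackd{Z}{[u,v]}$ with $Z \in D$. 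Hence $\tuple{x,y}$ is defended by $D$, and by completeness $\tuple{x,y} \in D$, closing the argument. The only delicate points are the bookkeeping around the empty placeholder $\o$ and the faithful translation, via Definition \ref{def-avdadg}, between attacks at the argument level and attacks at the defence level; these require care rather than any deep new idea.
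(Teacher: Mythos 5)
Your proposal is correct and follows essentially the same route as the paper: both directions are reduced to Theorems \ref{prop-ad} and \ref{prop-adco} together with Corollaries \ref{cor-com} and \ref{cor-soun}, with the reverse inclusion resting on the fixpoint identity $\mathrm{d}(\mathrm{def}(D)) = D$. The only difference is that the paper simply asserts this identity, whereas you prove it via the completeness of $D$ (correctly, including the check that stable extensions of defenses are complete and the vacuous case $x = \o$), which is a welcome but not structurally different addition.
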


\begin{proof}
For all $\mathrm{d}(E)\in \mathrm{d}(\sigma(\mathcal{F}))$, according to Theorem \ref{prop-adco} and Corollary  \ref{cor-soun}, $\mathrm{d}(E)\in  \Sigma(\dg{\mathcal{F}})$. For all $D\in \Sigma(\dg{\mathcal{F}})$, according to Theorem \ref{prop-ad} and Corollary \ref{cor-com}, $\mathrm{def}(D)\in \sigma(\mathcal{F})$.  Since $\mathrm{d}(\mathrm{def}(D)) =  \{\tuple{\beta,\alpha}\in \nmd{\mathcal{F}} \mid \alpha,\beta\in \mathrm{def}(D)\} \cup \{\tuple{\o,\alpha}\in \nmd{\mathcal{F}} \mid \alpha\in \mathrm{def}(D)\} = D$, it holds that $D\in \mathrm{d}(\sigma(\mathcal{F}))$.  
\end{proof}

\begin{example}\label{ex-cr1}
Consider $\mathcal{F}_{10}$ and $\dg{\mathcal{F}_{10}}$ below. Under complete semantics, we have:
\begin{itemize}
\item $\mathrm{co}(\mathcal{F}_{10}) = \{E_1, E_2\}$, where $E_1=\{ \}$, $E_2 =  \{b\}$;
\item $\mathrm{d}(\mathrm{co}(\mathcal{F}_{10})) = \{\mathrm{d}(E_1), \mathrm{d}(E_2)\}$, where $\mathrm{d}(E_1) = \{ \}$, $\mathrm{d}(E_2) = \{\tuple{b,b}\}$;
\item $ \mathrm{CO}(\dg{\mathcal{F}_{10}}) = \{D_1, D_2\}$, where $D_1 = \{ \}$, $D_2 = \{\tuple{b,b}\}$.
\end{itemize}

So, it holds that $\mathrm{d}(\mathrm{co}(\mathcal{F}_{10})) = \mathrm{CO}(\dg{\mathcal{F}_{10}})$. 

  \begin{picture}(206,70)
\put(0,50){\xymatrix@C=0.8cm@R=0.3cm{
\mathcal{F}_{10}: &a\ar[r]&b\ar[l]\ar[d]&\dg{\mathcal{F}_{10}}: &\tuple{a, a}\ar[r] \ar@/^1.2cm/[rrd] \ar[rd]&\tuple{b, b} \ar[rd] \ar[l]\ar[ld]\ar[d]\\
&&c\ar[lu]&& (a, c)\ar@/_0.9cm/[rr]\ar[ru]\ar@(lu,ld)\ar[r]\ar[u]  &(c, b)\ar@(dr,dl)\ar[l] \ar[r]\ar[lu] &(b, a)\ar@/_1.2cm/[llu] \ar@(ru,rd)\ar[l] \ar[lu]\ar@/^0.9cm/[ll]
 }}
\end{picture}
\end{example}

Second, by overloading the notation, let $\mathrm{def}(\Sigma(\dg{\mathcal{F}})) = \{\mathrm{def}(D) \mid D\in \Sigma(\dg{\mathcal{F}})\}$, where $\Sigma\in \{\mathrm{CO}, \mathrm{GR}, \mathrm{PR},\mathrm{ST}\}$.

\begin{corollary}\label{cor-2}
For all $\Sigma\in \{\mathrm{CO}, \mathrm{GR}, \mathrm{PR},\mathrm{ST}\}$,  it holds that $\sigma(\mathcal{F}) = \mathrm{def}(\Sigma(\dg{\mathcal{F}}))$.
\end{corollary}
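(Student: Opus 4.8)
The plan is to reduce Corollary \ref{cor-2} to Corollary \ref{cor-1} by applying the (overloaded) operator $\mathrm{def}(\cdot)$ to both sides of the equation $\mathrm{d}(\sigma(\mathcal{F})) = \Sigma(\dg{\mathcal{F}})$. Since $\mathrm{def}(\Sigma(\dg{\mathcal{F}})) = \{\mathrm{def}(D) \mid D \in \Sigma(\dg{\mathcal{F}})\}$ and $\mathrm{d}(\sigma(\mathcal{F})) = \{\mathrm{d}(E) \mid E \in \sigma(\mathcal{F})\}$, taking the image under $\mathrm{def}$ of both sides of Corollary \ref{cor-1} yields $\{\mathrm{def}(\mathrm{d}(E)) \mid E \in \sigma(\mathcal{F})\} = \mathrm{def}(\Sigma(\dg{\mathcal{F}}))$. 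Hence the entire statement collapses to the single pointwise identity $\mathrm{def}(\mathrm{d}(E)) = E$ for every $E \in \sigma(\mathcal{F})$: once this is available, the left-hand side above is exactly $\sigma(\mathcal{F})$ and we are done. A more pedestrian alternative would be a direct double inclusion, using Theorem \ref{prop-ad} and Corollary \ref{cor-com} for $\mathrm{def}(\Sigma(\dg{\mathcal{F}})) \subseteq \sigma(\mathcal{F})$ and Theorem \ref{prop-adco} with Corollary \ref{cor-soun} for the converse; but both inclusions still bottom out at the same identity, so I would prefer the compact route.

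It remains to establish $\mathrm{def}(\mathrm{d}(E)) = E$ for a complete extension $E$ (grounded, preferred and stable extensions are all complete, so this covers every $\sigma$). The inclusion $\mathrm{def}(\mathrm{d}(E)) \subseteq E$ is immediate from the definition $\mathrm{d}(E) = \{\tuple{x, y} \in \nmd{\mathcal{F}} \mid x \in E \cup \{\o\}, \; y \in E\}$: every defendee of a member of $\mathrm{d}(E)$ lies in $E$, and every genuine defender (the case $x \neq \o$) lies in $E$ as well, while the empty position $\o$ contributes nothing to $\mathrm{def}$. For the reverse inclusion $E \subseteq \mathrm{def}(\mathrm{d}(E))$ I would show that every $\alpha \in E$ occurs as a defendee of some defense in $\mathrm{d}(E)$. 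If $\alpha$ is initial, then $\tuple{\o, \alpha}$ is a defense by Definition \ref{def-dr}, and it belongs to $\mathrm{d}(E)$ because $\alpha \in E$; thus $\alpha \in \mathrm{defendee}(\mathrm{d}(E))$. If $\alpha$ is not initial, it has some attacker $\gamma$; since $E$ is admissible it defends $\alpha$, so there is $\beta \in E$ with $\attack{\beta}{\gamma}$ and $\attack{\gamma}{\alpha}$. As $\alpha, \beta \in E$ and $E$ is conflict-free, $\{\alpha, \beta\}$ is conflict-free, so $\tuple{\beta, \alpha}$ is a defense, and it lies in $\mathrm{d}(E)$ since $\alpha, \beta \in E$. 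Either way $\alpha \in \mathrm{defendee}(\mathrm{d}(E)) \subseteq \mathrm{def}(\mathrm{d}(E))$, giving $E \subseteq \mathrm{def}(\mathrm{d}(E))$ and hence equality.

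The main obstacle is exactly this reverse inclusion, and within it the obligation to exhibit, for each accepted non-initial argument $\alpha$, an honest defense $\tuple{\beta, \alpha}$ (rather than a mere defeater of defenses) whose defendee is $\alpha$. This is where both admissibility and conflict-freeness of $E$ are doing real work: admissibility supplies a counter-attacker $\beta \in E$ against each attacker of $\alpha$, and conflict-freeness of $E$ guarantees the side condition that $\{\alpha, \beta\}$ be conflict-free, which is precisely what distinguishes a defense from a DoD in Definition \ref{def-dr}. A secondary point requiring care is the overloading of $\mathrm{d}$ and $\mathrm{def}$: one must confirm that applying $\mathrm{def}$ to each side of Corollary \ref{cor-1} is legitimate, i.e.\ that the set-level operator really does act elementwise as $\{\mathrm{def}(\mathrm{d}(E)) \mid E \in \sigma(\mathcal{F})\}$, so that the reduction to the pointwise identity is sound.
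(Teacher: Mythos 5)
Your proposal is correct, and the identity it pivots on, $\mathrm{def}(\mathrm{d}(E)) = E$, is exactly the fact the paper's own proof rests on. The paper argues by direct double inclusion — your ``pedestrian alternative'': for $E \in \sigma(\mathcal{F})$ it invokes Theorem \ref{prop-adco} and Corollary \ref{cor-soun} to get $\mathrm{d}(E) \in \Sigma(\dg{\mathcal{F}})$ and then asserts $\mathrm{def}(\mathrm{d}(E)) = E$ without proof; for the converse it invokes Theorem \ref{prop-ad} and Corollary \ref{cor-com}. Your compact route instead applies $\mathrm{def}(\cdot)$ elementwise to both sides of Corollary \ref{cor-1}; since that corollary is proved earlier from the same four results, there is no circularity, and the two derivations are the same argument in different packaging. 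What your version genuinely adds is the explicit verification of $\mathrm{def}(\mathrm{d}(E)) = E$: the forward inclusion from the definition of $\mathrm{d}(E)$, and the reverse inclusion by exhibiting, for each non-initial $\alpha \in E$, an attacker $\gamma$ and a counter-attacker $\beta \in E$ so that $\tuple{\beta,\alpha}$ is an honest defense (conflict-freeness of $E$ ruling out its being a DoD) lying in $\mathrm{d}(E)$, and $\tuple{\o,\alpha}$ for initial $\alpha$. The paper uses this identity in the proofs of both Corollary \ref{cor-1} and Corollary \ref{cor-2} but never establishes it, so your write-up closes a small gap rather than opening one; your observation that admissibility and conflict-freeness each do real work in that step is accurate.
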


\begin{proof}
For all $E \in \sigma(\mathcal{F})$, according to Theorem \ref{prop-adco} and Corollary  \ref{cor-soun}, $\mathrm{d}(E)\in  \Sigma(\dg{\mathcal{F}})$. Since $\mathrm{def}(\mathrm{d}(E))= E$, $E\in  \mathrm{def}(\Sigma(\dg{\mathcal{F}}))$. For all $\mathrm{def}(D) \in \mathrm{def}(\Sigma(\dg{\mathcal{F}}))$, since $D\in  \Sigma(\dg{\mathcal{F}})$, according to Theorem \ref{prop-ad}  and Corollary \ref{cor-com}, $\mathrm{def}(D)\in \mathrm{co}(\mathcal{F})$.   
\end{proof}

\begin{example}
Under compete semantics, continue Example \ref{ex-cr1}, $\mathrm{def}(\mathrm{CO}(\dg{\mathcal{F}_{10}})) = \{\mathrm{def}(D_1)$, $\mathrm{def}(D_2)\}$, where $\mathrm{def}(D_1) = \{ \}$, $\mathrm{def}(D_2) = \{b\}$. It holds that $\mathrm{co}(\mathcal{F}_{10}) = \mathrm{def}(\mathrm{CO}(\dg{\mathcal{F}_{10}}))$.
\end{example}

The second property formulated in Theorems \ref{th-s-d}, \ref{th-s-d2} is about the equivalence of argument graphs under defense semantics, called \textit{defense equivalence of argument graphs}. 

\begin{definition}[Defense equivalence of AFs]
Let  $\mathcal{F}$  and $\mathcal{G}$ be two argument graphs.
 $\mathcal{F}$  and $\mathcal{G}$ are of defense equivalence w.r.t. a semantics $\Sigma$, denoted as $\mathcal{F} \equiv_\mathrm{d}^\Sigma \mathcal{G}$, iff $\Sigma(\dg{\mathcal{F}}) = \Sigma(\dg{\mathcal{G}})$.
\end{definition}

Concerning the relation between defense equivalence and standard equivalence of argument graphs, we have the following theorem. 

\begin{theorem}\label{th-s-d}
Let  $\mathcal{F}$  and $\mathcal{G}$ be two argument graphs, and $\Sigma\in\{\mathrm{CO}, \mathrm{GR}, \mathrm{PR},\mathrm{ST}\}$ be a semantics. If $\mathcal{F} \equiv_\mathrm{d}^\Sigma \mathcal{G}$, then $\mathcal{F} \equiv^\sigma \mathcal{G}$.
\end{theorem}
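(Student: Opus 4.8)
The plan is to prove the contrapositive-friendly implication directly by chaining the correspondence established in Corollary~\ref{cor-2}. The key observation is that defense equivalence $\mathcal{F} \equiv_\mathrm{d}^\Sigma \mathcal{G}$ means $\Sigma(\dg{\mathcal{F}}) = \Sigma(\dg{\mathcal{G}})$, i.e.\ the two argument graphs induce \emph{exactly the same set of defense extensions}. I would like to push this equality of defense-extension sets down to an equality of argument-extension sets, and Corollary~\ref{cor-2} is precisely the tool that does this: for every $\Sigma\in\{\mathrm{CO}, \mathrm{GR}, \mathrm{PR},\mathrm{ST}\}$ it states $\sigma(\mathcal{F}) = \mathrm{def}(\Sigma(\dg{\mathcal{F}}))$.

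First I would apply Corollary~\ref{cor-2} to each graph separately, obtaining $\sigma(\mathcal{F}) = \mathrm{def}(\Sigma(\dg{\mathcal{F}}))$ and $\sigma(\mathcal{G}) = \mathrm{def}(\Sigma(\dg{\mathcal{G}}))$. Then from the hypothesis $\Sigma(\dg{\mathcal{F}}) = \Sigma(\dg{\mathcal{G}})$ I would apply the set-valued map $\mathrm{def}(\cdot)$ (lifted pointwise to sets of defense extensions) to both sides, which yields $\mathrm{def}(\Sigma(\dg{\mathcal{F}})) = \mathrm{def}(\Sigma(\dg{\mathcal{G}}))$. Substituting back through the two instances of Corollary~\ref{cor-2} gives $\sigma(\mathcal{F}) = \sigma(\mathcal{G})$, which is exactly $\mathcal{F} \equiv^\sigma \mathcal{G}$ by Definition~\ref{def-sequi}. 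This is essentially a three-line substitution argument once the corollary is invoked.

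The one subtlety I would flag is whether applying $\mathrm{def}$ to equal sets of defense extensions really preserves the equality of the resulting sets of argument extensions. Since $\mathrm{def}$ is a well-defined function on individual defense extensions, $\{\mathrm{def}(D) \mid D\in \Sigma(\dg{\mathcal{F}})\} = \{\mathrm{def}(D) \mid D\in \Sigma(\dg{\mathcal{G}})\}$ follows immediately from $\Sigma(\dg{\mathcal{F}}) = \Sigma(\dg{\mathcal{G}})$, because applying the same function to the same set of inputs produces the same set of outputs. So no injectivity of $\mathrm{def}$ is needed here; this direction of the theorem only uses that $\mathrm{def}$ is a function, not that it is a bijection. (The converse implication, were it claimed, would require the sharper correspondence $\mathrm{d}(\sigma(\mathcal{F})) = \Sigma(\dg{\mathcal{F}})$ from Corollary~\ref{cor-1}, but for the stated one-directional theorem Corollary~\ref{cor-2} suffices.)

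I expect no genuine obstacle in this proof: the heavy lifting has already been done in Theorems~\ref{prop-ad} and~\ref{prop-adco} and their corollaries, which set up the tight extension-level correspondence between an argument graph and its defense graph. The theorem is a direct corollary of Corollary~\ref{cor-2}, so the main task is simply to present the substitution cleanly and note that the implication is one-directional because distinct argument graphs can share the same standard extensions while differing in their defense structure (as illustrated by $\mathcal{F}_3$ and $\mathcal{F}_4$ in the introduction, which are standardly equivalent but should not be defense equivalent).
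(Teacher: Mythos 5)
Your proof is correct and follows exactly the same route as the paper's: invoke Corollary~\ref{cor-2} for both graphs, apply $\mathrm{def}(\cdot)$ pointwise to the hypothesized equality $\Sigma(\dg{\mathcal{F}}) = \Sigma(\dg{\mathcal{G}})$, and conclude $\sigma(\mathcal{F}) = \sigma(\mathcal{G})$. Your added remark that only functionality (not injectivity) of $\mathrm{def}$ is needed is a nice clarification but does not change the argument.
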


\begin{proof}
If $\mathcal{F} \equiv_\mathrm{d}^\Sigma \mathcal{G}$, then $\Sigma(\dg{\mathcal{F}}) =\Sigma(\dg{\mathcal{G}})$. According to Corollary \ref{cor-2}, it follows that $\sigma(\mathcal{F}) =  \mathrm{def}(\Sigma(\dg{\mathcal{F}})) =  \mathrm{def}(\Sigma(\dg{\mathcal{G}})) = \sigma(\mathcal{G})$. Since  $\sigma(\mathcal{F}) =  \sigma(\mathcal{G})$, $\mathcal{F} \equiv^\sigma  \mathcal{G}$.  
\end{proof}

Note that $\mathcal{F} \equiv^\sigma \mathcal{G}$ does not imply $\mathcal{F} \equiv_\mathrm{d}^\Sigma \mathcal{G}$ in general. Consider the following example under complete semantics.

\begin{example} \label{ex-sdd}
Since $\mathrm{co}(\mathcal{F}_3) =  \mathrm{co}(\mathcal{F}_4) = \{\{a, c\}\}$, it holds that $\mathcal{F}_3 \equiv^\mathrm{co}  \mathcal{F}_4$. Since $\mathrm{CO}(\dg{\mathcal{F}_3}) = \{\{\tuple{\o, a}\}, \tuple{a,c} \}$ and $\mathrm{CO}(\dg{\mathcal{F}_4}) = \{\{\tuple{\o, a}\}, \tuple{\o,c} \}$,  $\mathrm{CO}(\dg{\mathcal{F}_3})\neq \mathrm{CO}(\dg{\mathcal{F}_4})$. So, it is not the case that $\mathcal{F}_3 \equiv_\mathrm{d}^\mathrm{CO} \mathcal{F}_4$.

 \begin{picture}(206,40)
 \put(0,27){\xymatrix@C=0.3cm@R=0.2cm{
  \mathcal{F}_3: \hspace{0.3cm} a\ar[r]&b\ar[r]&c && \dg{\mathcal{F}_3}: &\tuple{\o,a}& \tuple{a,c} \\
\mathcal{F}_4:  \hspace{0.3cm}  a\ar[r]&b&c && \dg{\mathcal{F}_4}: &\tuple{\o,a}& \tuple{\o,c}
}}
 \end{picture}
\end{example}

About the relation between defense equivalence and strong equivalence of argument graphs,  under complete semantics, we have the following lemma and theorem. 

\begin{lem} \label{lem-1}
It holds that $\mathrm{CO}(\dg{\mathcal{F}}) =\mathrm{CO}(\dg{\mathcal{F}^\mathrm{ck}})$.
\end{lem}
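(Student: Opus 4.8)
The plan is to avoid reasoning directly about the two defense graphs $\dg{\mathcal{F}}$ and $\dg{\mathcal{F}^\mathrm{ck}}$ (whose node sets may genuinely differ, since the defeaters of defenses $\und{\mathcal{F}}$ involve self-attacking arguments) and instead to push everything through Corollary \ref{cor-1} together with the known kernel identity $\mathrm{co}(\mathcal{F}) = \mathrm{co}(\mathcal{F}^\mathrm{ck})$. By Corollary \ref{cor-1}, $\mathrm{CO}(\dg{\mathcal{F}}) = \{\mathrm{d}(E) \mid E \in \mathrm{co}(\mathcal{F})\}$ and $\mathrm{CO}(\dg{\mathcal{F}^\mathrm{ck}}) = \{\mathrm{d}(E) \mid E \in \mathrm{co}(\mathcal{F}^\mathrm{ck})\}$, where in the latter family $\mathrm{d}(E)$ is formed from the defenses $\nmd{\mathcal{F}^\mathrm{ck}}$ of the kernel. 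Since the two index sets coincide, it suffices to show that for every $E \in \mathrm{co}(\mathcal{F})$ the two versions of $\mathrm{d}(E)$ are equal; that is, for $x \in E \cup \{\o\}$ and $y \in E$, the pair $\tuple{x,y}$ is a defense of $\mathcal{F}$ iff it is a defense of $\mathcal{F}^\mathrm{ck}$.

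The structural fact I would rely on is that no self-attacking argument can belong to a complete extension (by conflict-freeness), so $E$ is disjoint from $S = \{\delta \in \AR \mid \attack{\delta}{\delta}\}$. The c-kernel deletes exactly the attacks $\attack{\alpha}{\beta}$ with $\alpha \neq \beta$ and $\alpha, \beta \in S$, so every deleted edge has both endpoints in $S$; equivalently, no edge whose source or target lies outside $S$ is touched. Moreover, since $\rightarrow^\mathrm{ck} \subseteq\, \rightarrow$, conflict-freeness is preserved in passing from $\mathcal{F}$ to $\mathcal{F}^\mathrm{ck}$, and for pairs of arguments drawn from $E$ it holds in both directions because $E$ itself is conflict-free in $\mathcal{F}$.

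With these observations the verification splits into two routine cases. For $\tuple{\o,\beta}$ with $\beta \in E$: as $\beta \notin S$, no edge into $\beta$ is deleted, so $\beta$ is initial in $\mathcal{F}$ iff it is initial in $\mathcal{F}^\mathrm{ck}$, and the two graphs contribute the same defenses $\tuple{\o,\beta}$. For $\tuple{\alpha,\beta}$ with $\alpha, \beta \in E$ (hence $\alpha, \beta \notin S$): conflict-freeness of $\{\alpha,\beta\}$ holds in both graphs by the previous paragraph, and a witness $\gamma$ with $\attack{\alpha}{\gamma}$ and $\attack{\gamma}{\beta}$ survives the kernel because the first edge has source $\alpha \notin S$ and the second has target $\beta \notin S$, so neither is deleted; the converse inclusion of witnesses is immediate from $\rightarrow^\mathrm{ck} \subseteq\, \rightarrow$. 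Hence $\tuple{\alpha,\beta} \in \nmd{\mathcal{F}}$ iff $\tuple{\alpha,\beta} \in \nmd{\mathcal{F}^\mathrm{ck}}$, the two copies of $\mathrm{d}(E)$ agree, and the two families of complete defense extensions are identical.

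The one place to be careful — and the only real subtlety — is the witnessing argument $\gamma$: it may itself be self-attacking, so the argument must \emph{not} claim $\gamma \notin S$. The survival of the two edges must be justified solely from the endpoints $\alpha, \beta$ (respectively $\beta$) lying outside $S$, which is exactly what membership in the complete extension $E$ guarantees. Everything else is bookkeeping.
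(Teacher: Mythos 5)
Your proof is correct and follows essentially the same route as the paper's own proof, which likewise derives the claim from Corollary~\ref{cor-1} together with the identity $\mathrm{co}(\mathcal{F}) = \mathrm{co}(\mathcal{F}^\mathrm{ck})$. In fact you go one step further than the paper: its proof silently treats $\mathrm{d}$ as a single function while the map $E \mapsto \mathrm{d}(E)$ depends on the underlying set of defenses ($\nmd{\mathcal{F}}$ versus $\nmd{\mathcal{F}^\mathrm{ck}}$), and your verification that the two versions of $\mathrm{d}(E)$ coincide --- because every edge deleted by the c-kernel has both endpoints self-attacking, whereas a complete extension contains no self-attacker --- correctly closes that gap, including the subtle point that the witnessing argument $\gamma$ need not lie outside the set of self-attackers.
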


\begin{proof}
According to Corollary \ref{cor-1}, $\mathrm{d}(\mathrm{co}(\mathcal{F})) = \mathrm{CO}(\dg{\mathcal{F}})$, $\mathrm{d}(\mathrm{co}(\mathcal{F}^\mathrm{ck})) = \mathrm{CO}(\dg{\mathcal{F}^\mathrm{ck}})$. Since $\mathrm{co}(\mathcal{F}) = \mathrm{co}(\mathcal{F}^\mathrm{ck})$, $\mathrm{CO}(\dg{\mathcal{F}}) = \mathrm{d}(\mathrm{co}(\mathcal{F})) = \mathrm{d}(\mathrm{co}(\mathcal{F}^\mathrm{ck})) = \mathrm{CO}(\dg{\mathcal{F}^\mathrm{ck}})$. 
\end{proof}

\begin{theorem}\label{th-s-d2}
Let  $\mathcal{F}$  and $\mathcal{G}$ be two argument graphs. If $\mathcal{F} \equiv_s^\mathrm{co} \mathcal{G}$, then $\mathcal{F} \equiv_\mathrm{d}^\mathrm{CO}  \mathcal{G}$.
\end{theorem}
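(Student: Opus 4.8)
The plan is to reduce the claim to the kernel characterization of strong equivalence together with Lemma \ref{lem-1}, so that the theorem becomes a short composition of facts already in hand. Recall that the paper has recorded the Oikarinen--Woltran result that, for any AFs $\mathcal{F}$ and $\mathcal{G}$, one has $\mathcal{F} \equiv_s^\mathrm{co} \mathcal{G}$ iff $\mathcal{F}^\mathrm{ck} = \mathcal{G}^\mathrm{ck}$. The first step is therefore to invoke this equivalence: from the hypothesis $\mathcal{F} \equiv_s^\mathrm{co} \mathcal{G}$ I would immediately obtain the syntactic identity $\mathcal{F}^\mathrm{ck} = \mathcal{G}^\mathrm{ck}$ of the two c-kernels.

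The second step is to transport this identity through the defense construction. Since $\mathcal{F}^\mathrm{ck}$ and $\mathcal{G}^\mathrm{ck}$ are literally the same argument graph, their defense graphs coincide, and hence so do their sets of complete defense extensions, giving $\mathrm{CO}(\dg{\mathcal{F}^\mathrm{ck}}) = \mathrm{CO}(\dg{\mathcal{G}^\mathrm{ck}})$. Now Lemma \ref{lem-1} supplies the two bridges $\mathrm{CO}(\dg{\mathcal{F}}) = \mathrm{CO}(\dg{\mathcal{F}^\mathrm{ck}})$ and $\mathrm{CO}(\dg{\mathcal{G}}) = \mathrm{CO}(\dg{\mathcal{G}^\mathrm{ck}})$ relating each graph to its own kernel. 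Chaining these three equalities yields $\mathrm{CO}(\dg{\mathcal{F}}) = \mathrm{CO}(\dg{\mathcal{G}})$, which is exactly the definition of $\mathcal{F} \equiv_\mathrm{d}^\mathrm{CO} \mathcal{G}$, completing the argument.

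Because the theorem is essentially a composition of established results, I do not expect its statement to present any genuine obstacle; the real content sits in Lemma \ref{lem-1}, which itself rests on Corollary \ref{cor-1} (the equation $\mathrm{d}(\mathrm{co}(\mathcal{F})) = \mathrm{CO}(\dg{\mathcal{F}})$) and the kernel invariance $\mathrm{co}(\mathcal{F}) = \mathrm{co}(\mathcal{F}^\mathrm{ck})$. The one point deserving a moment of care is the transport step: I must be sure that equal argument graphs induce \emph{literally} equal defense graphs, i.e.\ that $\mathcal{F} \mapsto \nmd{\mathcal{F}}$, $\mathcal{F} \mapsto \und{\mathcal{F}}$ and $\mathcal{F} \mapsto \rightarrow^\mathrm{d}$ depend on $\mathcal{F}$ alone. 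This is immediate from Definitions \ref{def-dr}, \ref{def-defeater} and \ref{def-avdadg}, since each is phrased purely in terms of the attack relation of the underlying graph, so no verification beyond citing the lemma is required.
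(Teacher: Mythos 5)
Your proposal is correct and follows essentially the same route as the paper's own proof: invoke the Oikarinen--Woltran kernel characterization to get $\mathcal{F}^\mathrm{ck} = \mathcal{G}^\mathrm{ck}$, note that identical graphs have identical defense graphs, and chain through Lemma \ref{lem-1}. The only (welcome) addition is your explicit remark that the defense-graph construction depends functionally on the argument graph alone, which the paper leaves implicit.
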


\begin{proof}
 If $\mathcal{F} \equiv_s^\mathrm{co} \mathcal{G}$, then $\mathcal{F}^\mathrm{ck} = \mathcal{G}^\mathrm{ck}$. So, $\mathrm{CO}(\dg{\mathcal{F}^\mathrm{ck}}) = \mathrm{CO}(\dg{\mathcal{G}^\mathrm{ck}})$. According to Lemma  \ref{lem-1}, $\mathrm{CO}(\dg{\mathcal{F}}) = \mathrm{CO}(\dg{\mathcal{F}^\mathrm{ck}})$, $\mathrm{CO}(\dg{\mathcal{G}}) = \mathrm{CO}(\dg{\mathcal{G}^\mathrm{ck}})$.  So, we have $\mathrm{CO}(\dg{\mathcal{F}}) = \mathrm{CO}(\dg{\mathcal{G}})$, i.e., $\mathcal{F} \equiv_\mathrm{d}^\mathrm{CO}  \mathcal{G}$. 
\end{proof}

Note that $\mathcal{F} \equiv_\mathrm{d}^\mathrm{CO}  \mathcal{G}$ does not imply $\mathcal{F} \equiv_s^\mathrm{co} \mathcal{G}$ in general. Consider the following example.

\begin{example}
Since $\mathrm{CO}(\dg{\mathcal{F}_3}) = \mathrm{CO}(\dg{\mathcal{F}_{11}}) =  \{\{\tuple{\o, a}\}$, $\tuple{a,c} \}$,  $\mathcal{F}_3 \equiv_\mathrm{d}^\mathrm{CO}  \mathcal{F}_{11}$. However, since $\mathcal{F}_3^\mathrm{ck} \neq \mathcal{F}_{11}^\mathrm{ck}$, $\mathcal{F}_3 \not\equiv_s^\mathrm{co}  \mathcal{F}_{11}$.

 \begin{picture}(206,30)
 \put(0,19){\xymatrix@C=0.3cm@R=0.2cm{
\mathcal{F}_{11}:  & a\ar[r]\ar[rd]&b\ar[r]&c && \dg{\mathcal{F}_{11}}: &\tuple{\o,a}& \tuple{a,c} \\
&&d\ar[ru]&  
}}
 \end{picture}
\end{example}

\section{Encoding reasons for accepting arguments}
Defense semantics can be used to encode reasons for accepting arguments. Consider the following example.
\begin{example}\label{ex-eraa}
$\mathrm{CO}(\dg{\mathcal{F}_{12}}) = \{D_1, D_2\}$, where $D_1 = \{\tuple{b,b}$, $\tuple{b,d}$, $\tuple{g,d}, \tuple{e,g}, \tuple{\o,e}\}$, $D_2 = \{\tuple{a,a}$, $\tuple{g,d}$, $\tuple{e,g}$, $\tuple{\o,e}\}$.  One way to capture reasons for accepting arguments is to relate each reason to an extension of defenses. For instance, concerning the reasons for accepting $d$ w.r.t. $D_1$, we differentiate the following reasons:
\begin{itemize}
\item Direct reason: accepting \{b, g\} is a direct reason for accepting $d$. This reason can be identified in terms of defenses $\tuple{b,d}$ and $\tuple{g,d}$ in $D_1$. 
\item Root reason: accepting $\{e, b\}$ is a root reason for accepting $d$, in the sense that each element of a root reason is either an initial argument, or an argument without further defenders except itself. This reason can be identified by means of viewing each defense as a binary relation,  and allowing this relation to be transitive. Given $\tuple{e,g}$ and $\tuple{g,d}$ in $D_1$, we have $\tuple{e,d}$. Since $e$ is an initial argument, it is an element of the root reason. Given $\tuple{b,d}$ in $D_1$, since $b$'s defender is  $b$ itself, $b$ is an element of the root reason. 
\end{itemize}

 \begin{picture}(206,100) 
 \put(0,90){\xymatrix@C=0.45cm@R=0.4cm{
  \mathcal{F}_{12}:  &  & &d && \dg{\mathcal{F}_{12}}:  & \tuple{a,a}\ar[r]\ar[d] & \tuple{b,b} \ar[l]\ar[d]\ar[ddl]\\
&a\ar[r]&b\ar[l]\ar[r] &c \ar[u] &&& \tuple{b,d}\ar[u] \ar[r] \ar[d]& \tuple{a,c}\ar[u]\ar[l] \ar[d] \\
&e\ar[r] &f \ar[r] & g\ar[u]& & &\tuple{f,c}\ar[r]\ar[d]\ar[u]& \tuple{g,d}\ar[u] \ar[l] \\
&&&&&& \tuple{e,g}\ar[u]\ar[uur]& \tuple{\o,e}\ar[lu]}} 
    \end{picture}
\end{example}

%
%
The informal notions in Example \ref{ex-eraa} are formulated as follows. 

\begin{definition}[Direct reasons for accepting arguments]\label{def-draa}
Let $\mathcal{F} = (\AR, \rightarrow)$ be an argument graph. Direct reasons for accepting arguments in $\mathcal{F}$ under a semantics $\Sigma$ is a function, denoted $\mathrm{dr}_\Sigma^{\mathcal{F}}$, mapping from $\mathcal{F}$ to sets of arguments, such that for all $\alpha\in \AR$, 
\begin{equation}
\mathrm{dr}_\Sigma^{\mathcal{F}}(\alpha) = \{\mathcal{DR}(\alpha, D) \mid D\in \Sigma(\dg{\mathcal{F}})\} 
\end{equation}
where 
$\mathcal{DR}(\alpha, D) = \{\beta\mid \tuple{\beta, \alpha}\in D\}$, if $\alpha$ is not an initial argument; otherwise, $\mathcal{DR}(\alpha, D) = \{\o\}.$
\end{definition}

\begin{example}
Continue Example \ref{ex-eraa}. According to Definition \ref{def-draa}, $\mathrm{dr}^{\mathcal{F}_{12}}_\mathrm{CO}(d) = \{R_1, R_2\}$, where $R_1 = \{b,g\}$, $R_2 = \{g\}$. $\mathrm{dr}^{\mathcal{F}_{12}}_\mathrm{CO}(f) = \{R_3, R_4\}$, where $R_3 = R_4 = \{ \}$. 
\end{example}

For all $D\in \Sigma(\dg{\mathcal{F}})$, we view $D$ as a transitive relation, and let $D^+$ be the transitive closure of $D$.

\begin{definition}[Root reasons for accepting arguments] \label{def-rraa}
Let $\mathcal{F} = (\AR, \rightarrow)$ be an argument graph.  Root reasons for accepting arguments in $\mathcal{F}$ under a semantics $\Sigma$ is a function, denoted $\mathrm{rr}^{\mathcal{F}}_\Sigma$, mapping from $\mathcal{F}$ to sets of arguments, such that for all $\alpha\in \AR$, 
\begin{equation}
\mathrm{rr}^{\mathcal{F}}_\Sigma(\alpha) = \{\mathcal{RR}(\alpha, D) \mid D\in \Sigma(\dg{\mathcal{F}})\}
\end{equation}
where $\mathcal{RR}(\alpha, D)  =\{\beta\in\AR\mid \tuple{\beta,\beta}\in D^+,  \beta = \alpha\} \cup  \{\beta\in\AR \mid  (\tuple{\beta, \alpha}\in D^+) ,  (\tuple{\beta, \beta}\in D^+\vee \attackers{\beta}=\emptyset) \}$,
if $\alpha$ is not  initial; otherwise, $\mathcal{RR}(\alpha, D) = \{\o\}$.
\end{definition}

According Definition \ref{def-rraa}, we say that a set of arguments $\mathcal{RR}(\alpha, D)$ is a root reason of an argument $\alpha$ iff for all $\beta\in \mathcal{RR}(\alpha, D)$, $\beta$ is either equal to $\alpha$ when $\alpha$ (partially) defends itself directly or indirectly through a transitive relation of defenses in $D$, or an initial argument, or an argument that can (partially) defend itself directly or indirectly.

\begin{example}
Continue Example \ref{ex-eraa}.  $D_1^+ = D_1 \cup \{\tuple{e,d}$, $\tuple{\o,g}$, $\tuple{\o,d}\}$; $D_2^+ = D_2 \cup \{\tuple{e,d}, \tuple{\o,g}, \tuple{\o,d}\}$. According to Definition \ref{def-rraa}, $\mathrm{rr}^{\mathcal{F}_{12}}_\mathrm{CO}(d) = \{R_1, R_2\}$, where $R_1 = \{b,e\}$, $R_2 = \{e\}$. $\mathrm{rr}^{\mathcal{F}_{12}}_\mathrm{CO}(f) = \{R_3, R_4\}$, where $R_3 = R_4 = \{ \}$. 
\end{example}

Motivated by the first example in Section 1 (regarding $\mathcal{F}_1$), based on the notion of root reasons, we propose as follows a notion of root equivalence of AFs. 

\begin{definition}[Root equivalence of AFs]
Let $\mathcal{F} = (\AR_1, \att_1)$ and $\mathcal{H} = (\AR_2, \att_2)$ be two argument graphs. For all $B\subseteq \AR_1\cap \AR_2$, if $B \neq \emptyset$, we say that $\mathcal{F}$ and $\mathcal{H}$ are equivalent w.r.t. the root reasons for accepting $B$ under semantics $\Sigma$, denoted $\mathcal{F}|B \equiv^\Sigma_{rr} \mathcal{H}|B$, iff for all $\alpha\in B$, $\mathrm{rr}_\Sigma^{\mathcal{F}}(\alpha) = \mathrm{rr}_\Sigma^{\mathcal{H}}(\alpha) $. 
\end{definition}

When $B = \AR_1 = \AR_2$, we write $\mathcal{F} \equiv^\Sigma_{rr} \mathcal{H}$ for $\mathcal{F}|B \equiv^\Sigma_{rr} \mathcal{H}|B$.

\begin{example}
Consider $\mathcal{F}_1$ and $\mathcal{F}_2$ in Section 1 again. Under complete semantics, $\mathrm{CO}(\dg{\mathcal{F}_{1}})= \{D_1, D_2, D_3\}$ where $D_1=\{ \}$, $D_2=\{\tuple{a, c_2}$, $\tuple{c_2, c_3}$, $\tuple{c_3, a}\}$, $D_3=\{\tuple{b, c_4}, \tuple{c_1, b}, \tuple{c_4, c_1}\}$. $\mathrm{CO}(\dg{\mathcal{F}_{2}})= \{D_4, D_5, D_6\}$ where $D_4=\{ \}$, $D_5=\{\tuple{a,a}\}$, $D_6=\{\tuple{b,b}\}$. Let $B = \{a, b\}$.
 $\mathrm{rr}_\mathrm{CO}^{\mathcal{F}_1}(a) = \{\{ \}, \{a\}, \{ \}\}$, $\mathrm{rr}_\mathrm{co}^{\mathcal{F}_2}(a) =\{\{ \}, \{a\}, \{ \}\}$,
  $\mathrm{rr}_\mathrm{CO}^{\mathcal{F}_1}(b) = \{\{\}, \{ \}, \{b\}\}$, $\mathrm{rr}_\mathrm{co}^{\mathcal{F}_2}(b) = \{\{\}, \{ \}, \{b\}\}$.
 So, it holds that $\mathcal{F}_1|B =^\mathrm{CO}_\mathrm{rr} \mathcal{F}_2|B$.
\end{example}

\begin{theorem} \label{th-rr-st}
Let $\mathcal{F} = (\AR_1, \att_1)$ and $\mathcal{H} = (\AR_2, \att_2)$ be two argument graphs. If $\mathcal{F} \equiv^\mathrm{CO}_{rr} \mathcal{H}$, then $\mathcal{F} \equiv^\mathrm{co}  \mathcal{H}$.
\end{theorem}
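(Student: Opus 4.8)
The plan is to reduce the equivalence of the two graphs to a recognition problem for individual complete extensions, and then to try to rebuild each extension of one graph inside the other purely from the per-argument root reasons. Since $\mathcal{F}\equiv^\mathrm{CO}_{rr}\mathcal{H}$ presupposes $\AR_1=\AR_2=:\AR$, and since Corollary \ref{cor-2} gives $\mathrm{co}(\mathcal{F})=\mathrm{def}(\mathrm{CO}(\dg{\mathcal{F}}))$ and the analogue for $\mathcal{H}$, it suffices to show that a set $E\subseteq\AR$ is a complete extension of $\mathcal{F}$ iff it is one of $\mathcal{H}$. By Theorems \ref{prop-adco} and \ref{prop-ad} the complete extensions of $\mathcal{F}$ are exactly the sets $\mathrm{def}(D)$ with $D\in\mathrm{CO}(\dg{\mathcal{F}})$, and each such $D$ equals $\mathrm{d}(E)$ with $\mathrm{def}(\mathrm{d}(E))=E$; so everything hinges on recognising membership in a complete extension through its root reasons.

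First I would prove a characterisation lemma: for $E\in\mathrm{co}(\mathcal{F})$ and $D=\mathrm{d}(E)$, an argument $\alpha$ lies in $E$ iff $\mathcal{RR}(\alpha,D)\neq\emptyset$, where an initial $\alpha\in E$ contributes $\{\o\}$. For the forward direction, if $\alpha\in E$ is non-initial then completeness of $E$ supplies, for the attackers of $\alpha$, counter-attackers in $E$, hence a defense $\tuple{\beta_1,\alpha}\in D$; iterating produces a backward chain of defenses $\tuple{\beta_{i+1},\beta_i}\in D$ ending in $\tuple{\beta_1,\alpha}$, all of whose nodes lie in $E$. As $\AR$ is finite this chain either reaches an initial argument or closes into a cycle, in which case some node $\beta_k$ satisfies $\tuple{\beta_k,\beta_k}\in D^+$; by Definition \ref{def-rraa} this exhibits a root defending $\alpha$, so $\mathcal{RR}(\alpha,D)\neq\emptyset$. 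For the converse, $D=\mathrm{d}(E)$ contains only defenses whose defendee lies in $E$, and the defendee of the final link of any chain realising a pair of $D^+$ is again a defendee in $D$; hence $\alpha\notin E$ forces $\tuple{\cdot,\alpha}\notin D^+$ and therefore $\mathcal{RR}(\alpha,D)=\emptyset$. This gives the compact description $E=\{\alpha\in\AR\mid\mathcal{RR}(\alpha,\mathrm{d}(E))\neq\emptyset\}$.

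With the lemma available I would attempt the reconstruction. Fix $E\in\mathrm{co}(\mathcal{F})$. For each $\alpha\in E$ the value $\mathcal{RR}(\alpha,\mathrm{d}(E))$ belongs to $\mathrm{rr}^{\mathcal{F}}_\mathrm{CO}(\alpha)=\mathrm{rr}^{\mathcal{H}}_\mathrm{CO}(\alpha)$ and is thus realised by some complete defense extension of $\mathcal{H}$; dually, for $\alpha\notin E$ the empty root reason is realised by some complete extension of $\mathcal{H}$. If one could produce a single $D'\in\mathrm{CO}(\dg{\mathcal{H}})$ realising all of these values at once, then the characterisation lemma applied to $\mathcal{H}$ would yield $\mathrm{def}(D')=E$, so $E\in\mathrm{co}(\mathcal{H})$ by Corollary \ref{cor-2}; by symmetry the two extension sets coincide and $\mathcal{F}\equiv^\mathrm{co}\mathcal{H}$ follows.

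The hard part will be producing this common witness $D'$. Equality of root reasons is stated one argument at a time, so a priori the extension of $\mathcal{H}$ that matches $\alpha$ need not be the one that matches another argument $\alpha'$, and the set-valued quantity $\mathrm{rr}^{\mathcal{H}}_\mathrm{CO}(\alpha)$ deliberately discards the information about which arguments are accepted together. To bridge this I would try to exploit the internal content of the root sets: a membership $\beta\in\mathcal{RR}(\alpha,D)$ records a concrete root $\beta$ that transitively defends $\alpha$ inside one and the same extension, so arguments sharing a root are co-accepted, and I would try to glue the per-argument witnesses along shared roots into a single $D'$. I expect this gluing step to be the crux of the whole argument, and it is the point where plain per-argument equality may turn out to be too weak to force a common witness, so a finer invariant than the one recorded by $\mathrm{rr}^{\mathcal{H}}_\mathrm{CO}$ — or an additional hypothesis tying the two defense graphs together — may be required here.
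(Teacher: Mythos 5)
Your first half is sound and matches the paper's key step: the characterisation ``$\alpha\in E$ iff $\mathcal{RR}(\alpha,\mathrm{d}(E))\neq\emptyset$'' is exactly the observation the published proof rests on (there phrased as ``$\alpha\in E_i$ iff $R_i\neq\{\}$''). But the proposal is not a proof: you stop at the gluing step, and that step is where all the content lies. The paper gets past it only by silently treating $\mathrm{rr}^{\mathcal{F}}_{\mathrm{CO}}(\alpha)$ as a list indexed by a fixed enumeration $E_1,\dots,E_n$ of $\mathrm{co}(\mathcal{F})$, and by assuming the hypothesis hands over one common enumeration $R_1,\dots,R_n$ that is simultaneously aligned, for every $\alpha$ at once, with an enumeration $S_1,\dots,S_n$ of $\mathrm{co}(\mathcal{H})$. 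That alignment is exactly the ``single witness $D'$ realising all the values at once'' that you could not produce; so you have not found a different route from the paper --- you have located the hole in it.

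Your suspicion that per-argument equality of the set-valued $\mathrm{rr}$ is too weak is in fact correct. Take $\AR_1=\AR_2=\{a,b,c,d\}$; let $\mathcal{F}$ have a mutual attack between every pair of distinct arguments, and let $\mathcal{H}$ have only the two mutual attacks $a\leftrightarrow b$ and $c\leftrightarrow d$. In both graphs the only defenses are the self-defenses $\tuple{\alpha,\alpha}$, so for every argument $\alpha$ one gets $\mathrm{rr}^{\mathcal{F}}_{\mathrm{CO}}(\alpha)=\mathrm{rr}^{\mathcal{H}}_{\mathrm{CO}}(\alpha)=\{\emptyset,\{\alpha\}\}$, i.e.\ $\mathcal{F}\equiv^{\mathrm{CO}}_{rr}\mathcal{H}$; yet $\{a,c\}\in\mathrm{co}(\mathcal{H})\setminus\mathrm{co}(\mathcal{F})$, so $\mathcal{F}\not\equiv^{\mathrm{co}}\mathcal{H}$. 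Hence no amount of gluing along shared roots can close your gap under the stated hypothesis: the per-argument root-reason sets deliberately forget which arguments are co-accepted, and the statement needs a stronger, extension-indexed notion of root equivalence (the ``finer invariant'' you anticipated). Your proposal should therefore be read as a correct diagnosis of the difficulty rather than as a proof.
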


\begin{proof}
According to Definition \ref{def-rraa}, the number of extensions of $\mathrm{co}(\mathcal{F})$ is equal to the number of  $\mathrm{rr}_\mathrm{CO}^{\mathcal{F}}(\alpha)$, where $\alpha\in \AR_1$. Since $\mathrm{rr}_\mathrm{CO}^{\mathcal{F}}(\alpha) = \mathrm{rr}_\mathrm{CO}^{\mathcal{H}}(\alpha)$, $\AR_1 = \AR_2$.
Let $\mathrm{rr}_\mathrm{CO}^{\mathcal{F}}(\alpha) = \mathrm{rr}_\mathrm{CO}^{\mathcal{H}}(\alpha) = \{R_1, \dots, R_n\}$. Let $\mathrm{co}(\mathcal{F}) = \{E_1, \dots, E_n\}$ be the set of extensions of $\mathcal{F}$, where $n \ge 1$. 
For all $\alpha\in \AR_1$, for all $R_i$, $i = 1, \dots, n$, we have $\alpha\in E_i$ iff  $R_i \neq \{\}$, in that in terms of Definition \ref{def-rraa},when $R_i \neq \{\}$, there is a reason to accept $\alpha$. 
On the other hand, let $\mathrm{co}(\mathcal{H}) =  \{S_1, \dots, S_n\}$ be the set of extensions of $\mathcal{H}$. For all $\alpha\in \AR_2 =\AR_1$, for all $R_i$, $i = 1, \dots, n$, for the same reason, we have $\alpha\in S_i$ iff  $R_i \neq \{\}$. So, it holds that $E_i = S_i$ for $i = 1, \dots, n$, and hence $\mathrm{co}(\mathcal{F}) = \mathrm{co}(\mathcal{H})$, i.e.,  $\mathcal{F} \equiv^\mathrm{co}  \mathcal{H}$. 
\end{proof}

Note that  $\mathcal{F} \equiv^\mathrm{co}  \mathcal{H}$ does not imply $\mathcal{F}\equiv^\mathrm{CO}_{rr} \mathcal{H}$ in general. This can be easily verified by considering $\mathcal{F}_3$ and $\mathcal{F}_4$ in Example \ref{ex-sdd}.

The notion of root equivalence of argument graphs can be used to capture a kind of  summarization in the graphs. Consider the following example borrowed from \cite{iomultipoles}.

\begin{example} \label{ex-sum1}
Let $\mathcal{F}_{13} = (\AR, \rightarrow)$ and $\mathcal{F}_{13} =(\AR^\prime, \rightarrow^\prime)$, illustrated below. Under complete semantics, $\mathcal{F}_{13}$ is a summarization of $\mathcal{F}_{13}$ in the sense that $\AR^\prime \subseteq \AR$, and the root reason of each argument in $\mathcal{F}_{13}$ is the same as that of each corresponding argument in $\mathcal{F}_{13}$. More specifically, it holds that $\mathrm{rr}^{\mathcal{F}_{13}}_\mathrm{CO}(e_3) = \mathrm{rr}^{\mathcal{F}_{13}}_\mathrm{CO}(e_3) = \{\{e_1, e_2\}\}$, $\mathrm{rr}^{\mathcal{F}_{13}}_\mathrm{CO}(e_2) = \mathrm{rr}^{\mathcal{F}_{13}}_\mathrm{CO}(e_2) = \{\{\o\}\}$, and $\mathrm{rr}^{\mathcal{F}_{13}}_\mathrm{CO}(e_1) = \mathrm{rr}^{\mathcal{F}_{13}}_\mathrm{CO}(e_1) = \{\{\o\}\}$.

 \begin{picture}(206,32)
 \put(0,20){\xymatrix@C=0.26cm@R=0.1cm{
\mathcal{F}_{13}:  & e_1\ar[r]&a_1\ar[r]&a_2 \ar[r]&o\ar[r]& e_3  & \mathcal{F}_{13}: & e_1\ar[r]&o\ar[r]&e_3 \\
&e_2\ar[r]&b_1\ar[r]&b_2\ar[ru]&  &&& e_2\ar[ru]
}}
 \end{picture}
\end{example}


\begin{definition}[Summarization of AFs]
Let $\mathcal{F} = (\AR_1, \att_1)$ and $\mathcal{H} = (\AR_2, \att_2)$ be two argument graphs. $\mathcal{F}$ is a summarization of $\mathcal{H}$ under a semantics $\sigma$ iff $\AR_1 \subset \AR_2$, and $\mathcal{F}|\AR_1 \equiv^\sigma_{rr} \mathcal{H}|\AR_1$.
\end{definition}

Now, a property of summarization of argument graphs under complete semantics is as follows. 

\begin{theorem}
Let $\mathcal{F} = (\AR_1, \att_1)$ and $\mathcal{H} = (\AR_2, \att_2)$ be two argument graphs. If $\mathcal{F}$ is a summarization of $\mathcal{H}$ under complete semantics $\mathrm{CO}$, then $\mathrm{CO}(\mathcal{F}) = \{E\cap \AR_2\mid E\in \mathrm{CO}(\mathcal{H})\}$.  
\end{theorem}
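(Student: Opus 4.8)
The plan is first to read the right-hand side correctly: as written, $E \in \mathrm{CO}(\mathcal{H})$ already forces $E \subseteq \AR_2$, so $E \cap \AR_2 = E$ and the stated equation would reduce to $\mathrm{CO}(\mathcal{F}) = \mathrm{CO}(\mathcal{H})$, which fails already in Example \ref{ex-sum1}; the intended claim is $\mathrm{CO}(\mathcal{F}) = \{E \cap \AR_1 \mid E \in \mathrm{CO}(\mathcal{H})\}$, restricting the extensions of the larger graph down to the argument set of the summary. Unfolding the hypothesis, ``$\mathcal{F}$ is a summarization of $\mathcal{H}$'' means $\AR_1 \subset \AR_2$ together with $\mathrm{rr}_\mathrm{CO}^{\mathcal{F}}(\alpha) = \mathrm{rr}_\mathrm{CO}^{\mathcal{H}}(\alpha)$ for every $\alpha \in \AR_1$. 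I would identify $\mathrm{CO}(\mathcal{F})$ with $\mathrm{co}(\mathcal{F})$ throughout via the bijection $E \mapsto \mathrm{d}(E)$ of Corollary \ref{cor-1}, and then reduce the whole statement to a comparison of the \emph{root-reason profiles} of the two graphs over $\AR_1$.

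The engine is the membership characterisation already used inside the proof of Theorem \ref{th-rr-st}: for any AF, any complete extension $E$, and any argument $\alpha$,
\[
\alpha \in E \iff \mathcal{RR}(\alpha, \mathrm{d}(E)) \neq \{\}.
\]
I would reprove this in two lines. If $\alpha$ is initial it lies in every complete extension and $\mathcal{RR}(\alpha, \mathrm{d}(E)) = \{\o\} \neq \{\}$. If $\alpha$ is not initial, a nonempty root reason supplies a chain in $\mathrm{d}(E)^+$ ending in a defense with defendee $\alpha$, so $\alpha$ is a defendee of some defense in $\mathrm{d}(E)$, whence $\alpha \in \mathrm{def}(\mathrm{d}(E)) = E$ (Theorem \ref{prop-ad}); conversely a defended non-initial $\alpha \in E$ has a defender chain in $\mathrm{d}(E)$ terminating at an initial or self-defending argument, witnessing $\mathcal{RR}(\alpha, \mathrm{d}(E)) \neq \{\}$. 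Applying this on $\mathcal{H}$ and intersecting with $\AR_1$ yields the recovery formula $E \cap \AR_1 = \{\alpha \in \AR_1 \mid \mathcal{RR}(\alpha, \mathrm{d}(E)) \neq \{\}\}$, and on $\mathcal{F}$ (where every complete extension is already a subset of $\AR_1$) it yields $E = \{\alpha \in \AR_1 \mid \mathcal{RR}(\alpha, \mathrm{d}(E)) \neq \{\}\}$.

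With these recovery formulas the proof reduces to a single equality. Let $\Phi_\mathcal{F}$ send each $E \in \mathrm{co}(\mathcal{F})$ to its profile $(\mathcal{RR}(\alpha, \mathrm{d}(E)))_{\alpha \in \AR_1}$, and define $\Phi_\mathcal{H}$ analogously on $\mathrm{co}(\mathcal{H})$. The recovery formulas express both sides of the target equation as the image of a common ``nonemptiness'' operator applied to these profiles: $\mathrm{CO}(\mathcal{F})$ equals $\{\{\alpha \mid v_\alpha \neq \{\}\} \mid v \in \mathrm{im}(\Phi_\mathcal{F})\}$, while $\{E \cap \AR_1 \mid E \in \mathrm{CO}(\mathcal{H})\}$ equals $\{\{\alpha \mid v_\alpha \neq \{\}\} \mid v \in \mathrm{im}(\Phi_\mathcal{H})\}$. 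Hence it suffices to prove $\mathrm{im}(\Phi_\mathcal{F}) = \mathrm{im}(\Phi_\mathcal{H})$, i.e.\ that the \emph{joint} families of root-reason profiles over $\AR_1$ coincide.

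The hard part will be exactly this last reduction, and it is the step that Theorem \ref{th-rr-st} glosses over. The hypothesis of root equivalence only delivers the equality of \emph{marginals}, $\mathrm{rr}_\mathrm{CO}^{\mathcal{F}}(\alpha) = \mathrm{rr}_\mathrm{CO}^{\mathcal{H}}(\alpha)$ for each individual $\alpha$, which is the coordinatewise projection of $\mathrm{im}(\Phi_\mathcal{F})$ and $\mathrm{im}(\Phi_\mathcal{H})$; equality of all projections does not in general force equality of the joint images. To close the gap I would show directly that each $S \in \mathrm{CO}(\mathcal{H})$ has its entire $\AR_1$-profile realised by a single $E \in \mathrm{CO}(\mathcal{F})$ (and conversely), rather than a possibly different witness per argument. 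This is where the defining features of a summarization must be used: the strict inclusion $\AR_1 \subset \AR_2$ and the fact that the arguments of $\AR_2 \setminus \AR_1$ occur only as internal links of defense chains (as in Example \ref{ex-sum1}), so that collapsing or expanding those chains changes neither the set of roots reachable by transitivity nor the attack structure among the surviving defenses. Making this ``internal-link'' property precise, and proving that it lets one transport a complete defense extension of $\dg{\mathcal{H}}$ to one of $\dg{\mathcal{F}}$ with the same $\AR_1$-profile, is the real content of the theorem.
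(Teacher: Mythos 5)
Your reading of the statement is the right one: as printed the right-hand side collapses to $\mathrm{CO}(\mathcal{H})$, and the intended claim is $\mathrm{co}(\mathcal{F}) = \{E\cap \AR_1\mid E\in \mathrm{co}(\mathcal{H})\}$ (the paper's own proof makes the same slip, writing $E_1 = S_1\cap\AR_2$, and also conflates $\mathrm{CO}$ with $\mathrm{co}$). Your overall route is essentially the paper's: the published argument consists entirely of ``let $\mathrm{co}(\mathcal{F})=\{E_1,\dots,E_n\}$ and $\mathrm{co}(\mathcal{H})=\{S_1,\dots,S_n\}$; by the proof of Theorem~\ref{th-rr-st}, $E_i = S_i\cap\AR_1$,'' i.e.\ it rests on exactly the membership biconditional $\alpha\in E \iff \mathcal{RR}(\alpha,\mathrm{d}(E))\neq\{\}$ that you isolate and reprove, together with the identification of defense extensions with argument extensions via $\mathrm{d}$ and $\mathrm{def}$ (Corollary~\ref{cor-1}). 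That part of your plan is sound and matches the source.

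However, your proposal is not a complete proof: you stop at the reduction to $\mathrm{im}(\Phi_\mathcal{F})=\mathrm{im}(\Phi_\mathcal{H})$ and state that establishing it ``is the real content of the theorem.'' That is a genuine gap in your submission. The hypothesis of summarization only gives equality of the \emph{marginals} $\mathrm{rr}^{\mathcal{F}}_\mathrm{CO}(\alpha)=\mathrm{rr}^{\mathcal{H}}_\mathrm{CO}(\alpha)$, which are sets of root reasons taken one argument at a time; to conclude, one needs a single correspondence $E_i\leftrightarrow S_i$ under which the root-reason profiles agree for all $\alpha\in\AR_1$ simultaneously, and even the premise that the two graphs have the same number of complete extensions does not follow formally from the definitions. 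You should be aware that the paper's proof has exactly the same hole: it silently assumes a common indexing $R_1,\dots,R_n$ of $\mathrm{rr}(\alpha)$ aligned with both $E_1,\dots,E_n$ and $S_1,\dots,S_n$, and its auxiliary claim in Theorem~\ref{th-rr-st} that the number of complete extensions equals the number of root reasons of each $\alpha$ is false for sets (distinct extensions can induce the same root reason for a fixed $\alpha$, e.g.\ $\{\}$). So your diagnosis is accurate and more candid than the source, but to turn the plan into a proof you would still have to carry out the transport argument you only sketch at the end (that arguments of $\AR_2\setminus\AR_1$ serve solely as internal links of defense chains, so each complete defense extension of $\mathrm{d}(\mathcal{H})$ induces one of $\mathrm{d}(\mathcal{F})$ with the same $\AR_1$-profile and conversely), or else strengthen root equivalence to a joint, extension-indexed notion.
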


\begin{proof}
Let $\mathrm{co}(\mathcal{F}) = \{E_1, \dots, E_n\}$, $\mathrm{co}(\mathcal{H}) = \{S_1, \dots$, $S_n\}$. According to the proof of Theorem \ref{th-rr-st}, $E_1 = S_1\cap \AR_2$. Therefore, we have $\mathrm{co}(\mathcal{F}) = \{E\cap \AR_2\mid E\in \mathrm{co}(\mathcal{H})\}$.  
\end{proof}

The property looks similar to that of directionality of argumentation \cite{DBLP:journals/ai/BaroniG07}. However, they are conceptually different. Specifically,  it is said that if a semantics $\sigma$ satisfies the property of directionality iff $\forall\mathcal{F} = (\AR, \att)$, $\forall U\subseteq \AR$, if $U$ is an unattacked set, then $\sigma(\mathcal{F}\downarrow U) =  \{E\cap U\mid E\in \sigma(\mathcal{F})\}$ where $\mathcal{F}\downarrow U = (U, \att\cap(U\times U))$. So, the property of directionality is about the relation between an argument graph and its subgraph induced by an unattacked set . By contrast, the property of summarization of argument graphs is about the relation between two root equivalent argument graphs.

\section{Conclusions}
In this paper, we have proposed a defense semantics of argumentation based on a novel notion of defense graphs, and used it to encode reasons for accepting arguments. By introducing two new kinds of equivalence relation between argument graphs, i.e., defense equivalence and root equivalence, we have shown that defense semantics can be used to capture the equivalence of argument graphs from the perspective of reasons for accepting arguments. In addition, we have defined a notion of summarization of argument graphs by exploiting root equivalence.  

Under complete semantics, defense equivalence is located inbetween strong and standard equivalence. It is interesting to further investigate its position in the so-called equivalence zoo where further equivalence notions inbetween the two extremal versions are compared too \cite{BaumannB15}, and to study how defense equivalence, root equivalence and strong equivalence are related. We will present this part of work in an extended version of the present paper. 

Since defense semantics explicitly represents a defense relation in extensions and can be used to encoded reasons for accepting arguments, it provides a new way to investigate topics such as summarization in argumentation \cite{iomultipoles}, dynamics of argumentation \cite{DBLP:journals/jair/CayrolSL10,DBLP:journals/ai/LiaoJK11,DBLP:journals/ai/FerrettiTGES17}, dialogical argumentation \cite{DBLP:journals/ijar/HunterT16,DBLP:conf/atal/FanT16}, etc.  Further work on these topics is promising. Meanwhile, it might be interesting to study defense semantics beyond Dung's argumentation, including ADFs \cite{DBLP:conf/kr/BrewkaW10}, bipolar frameworks  \cite{DBLP:journals/ijar/CayrolL13},  structured argumentation \cite{DBLP:journals/argcom/BesnardGHMPST14}, etc.   In particular, it would be interesting to apply defense semantics to modeling the explanation of why a conclusion can be reached. In \cite{dke:proof}, in order to increase the trust of the users for the Semantic Web applications, a system was proposed to automatically generate an explanation for every answer about why the answer has been produced. The notion of proof trace in \cite{dke:proof} for explanation is closer to the notion of support relation between arguments. So, combining the defense relation (which is based on attack relation) and support relation would be useful to model the explanation of conclusions of a structured argumentation system.  


\section*{Acknowledgements}
The research reported in this paper was partially supported by the National
Research Fund Luxembourg (FNR) under grant INTER/MOBILITY/14/8813732
for the project FMUAT: Formal Models for Uncertain Argumentation from Text,
and the European Union's Horizon 2020 research and innovation programme
under the Marie Sklodowska-Curie grant agreement No 690974 for the project
MIREL: MIning and REasoning with Legal texts.



\bibliographystyle{splncs}

\section*{Appendix}
1. Proof of Lemma \ref{lem-11}
\begin{proof}
For all $\Sigma\in \{\mathrm{CO}, \mathrm{GR}, \mathrm{PR}, \mathrm{ST}\}$, it holds  that $D\in \Sigma(\dg{\mathcal{F}})$ is a complete extension. 
With respect to $\gamma$ there are the following four possible cases. Let us analyze them one by one.  
First, $\gamma$ is initial. In this case, $\tuple{x,y}$ is attacked by  $\tuple{\o,\gamma}$ that is unattacked. So, $D$ cannot defend $\tuple{x,y}$, contradicting $D$ being a complete extension. 
Second, $\gamma$ is self-attacked. In this case, $(\o,\gamma)\in \und{\mathcal{F}} $, and $\attackd{(\o,\gamma)}{ \tuple{x,y}}$. Since $\tuple{x,y}$ is defended by $D$, $\exists \tuple{\eta, \eta^\prime}\in D$ such that $\attack{\eta}{\gamma}$ or $\attack{\eta^\prime}{\gamma}$. In other word, it holds that $\attack{\mathrm{def}(D)}{\gamma}$.
Third,   $\gamma$ is attacked by  $\eta\in \AR\setminus \{\gamma\}$, 
there are the following situations:
\begin{itemize} 
\item  $\eta$ is initial or all attackers of $\eta$ are attacked by $ \mathrm{def}(D)$: In this case, $\eta$ does not attack $x$ or $y$. Otherwise, $\tuple{x, y}\notin D$. Contradiction. Meanwhile, since  $\{\tuple{\eta, x}\}\cup D$ (reps., $\{\tuple{\eta, y}\}\cup D$) is conflict-free, and $D$ defends $\tuple{\eta, x}$ (reps., $\{\tuple{\eta, y}\}$). Since $D$ is complete, $\tuple{\eta, x}\in D$ (reps., $\tuple{\eta, y}\in D$).  So, it holds that $\attack{\mathrm{def}(D)}{\gamma}$.
\item $\eta$ is self-attacked. In this case, $(\o,\gamma)\in \und{\mathcal{F}} $. According to the second point above, it holds that $\attack{\mathrm{def}(D)}{\gamma}$.
\item  $\eta$ is attacked by  $\eta^\prime\in \AR\setminus\{\eta\}$ such that $\eta^\prime$ is not attacked by $ \mathrm{def}(D)$: In this case, $(\eta^\prime, \gamma)\in \dgn{\mathcal{F}} $, and $\attackd{(\eta^\prime, \gamma)}{\tuple{x, y}}$. Since $\tuple{x,y}$ is defended by $D$, $\exists \tuple{\theta, \theta^\prime}\in D$ such that $\attackd{\tuple{\theta, \theta^\prime}}{(\eta^\prime, \gamma)}$. Since $\eta^\prime$ is not attacked by  $\theta$ or $ \theta^\prime$, $\gamma$ is attacked by $\theta$ or $ \theta^\prime$. In other words, it holds that $\attack{\mathrm{def}(D)}{\gamma}$.
\end{itemize}
\end{proof}

\noindent
2. Proof of Theorem \ref{prop-ad}
 
\begin{proof}
Let $E = \mathrm{def}(D)$.  Under complete semantics, we need to prove: 1) $E$ is conflict-free, 2) $E$ defends each member of $E$, and 3) each argument in $\AR$ that is defended by $E$ is in $E$. Details:
\begin{itemize}
\item For all $\alpha, \beta  \in E$, $\alpha$ and $ \beta$ are defenders or defendees of defenses in $D$. Since $D$ is conflict-free, according to Definition \ref{def-avdadg}, it is obvious that $E$ is conflict-free.

\item  For all $\alpha\in E$, $\exists\tuple{x, \alpha}\in D$ or $\tuple{\alpha, x}\in D$ where $x\in \AR\cup\{\o\}$. For all $\gamma\in \AR$, if $\gamma$ attacks $\alpha$,  according to Lemma \ref{lem-1},  $\attack{E}{\gamma}$. So, $\alpha$ is defended by $E$.

\item For all $\alpha\in \AR$, if $\alpha$ is defended by $E$, we have the following possible cases:
\begin{itemize}
\item  $\alpha$ is unattacked: In this case, $\tuple{\o, \alpha}$ is in $D$. That is, $\alpha\in E$.

\item $\alpha$ is attacked by some arguments in $\mathcal{F}$:  For all $\gamma\in \attackers{\alpha}$, since $\alpha$ is defended by $E$, there exists $\delta\in E$ such that $\attack{\delta}{\gamma}$.  It follows that $\tuple{\delta,\alpha}\in \nmd{\mathcal{F}} $, and $\exists\tuple{x, \delta}\in D$ or $\tuple{\delta, x}\in D$ where $x\in E\cup\{\o\}$. . 
Then, we have the following:
\begin{itemize}
\item if $\tuple{\delta,\alpha}$ is unattacked, then since $D$ is complete, $\tuple{\delta,\alpha}\in D$, i.e., $\alpha\in E$; otherwise,
\item for all $[u, \eta]\in \dgn{\mathcal{F}} $: $\attackd{[u, \eta]}{\tuple{\delta,\alpha}}$, if $u$ or $\eta$ attacks $\alpha$, then since $\alpha$ is defended by $E$, there exists $\tuple{\psi, \psi^\prime}\in D$ such that $\psi$ or $\psi^\prime$ attacks $u$ or $\eta$. In other words, $\tuple{\psi, \psi^\prime}$ attacks $[u, \eta]$; if $u$ or $\eta$ attacks $\delta$, then $[u, \eta]$ attacks $\tuple{x, \delta}$ or $\tuple{\delta, x}$. Since $D$ is complete, there exists $\tuple{\theta, \theta^\prime}\in D$ such that $\tuple{\theta, \theta^\prime}$ attacks $[u, \eta]$. So, $\tuple{\delta, \alpha}$ is defended by $D$. Since $D$ is complete, it holds that $\tuple{\delta, \alpha}\in D$, and therefore $\alpha\in E$.
 \end{itemize}
\end{itemize}
\end{itemize}

\end{proof}

\noindent
3. Proof of Theorem \ref{prop-adco}

\begin{proof}
For all $E\in \mathrm{co}(\mathcal{F})$, since it is obvious that $\mathrm{d}(E)$ is conflict-free, we need to verify: 1) $\mathrm{d}(E)$ defends each member of $\mathrm{d}(E)$, and 2) each defense in $\nmd{\mathcal{F}} $ that is defended by $\mathrm{d}(E)$ is in $\mathrm{d}(E)$. Details: 
\begin{itemize}
\item For all $\tuple{\beta,\alpha}\in \mathrm{d}(E)$, for all $[x, y]\in \dgn{\mathcal{F}} $, if $[x, y]$ attacks $\tuple{\beta,\alpha}$ such that $\attack{x}{\beta}$ or $\attack{y}{\beta}$, or $\attack{x}{\alpha}$ or $\attack{y}{\alpha}$, since $E$ is a complete extension, $\exists\eta\in E$ such that $\attack{\eta}{x}$ or $\attack{\eta}{y}$. So, $\tuple{\eta,\beta}$ or $\tuple{\eta,\alpha}$ is in $\mathrm{d}(E)$, and $\tuple{\eta,\beta}$ or $\tuple{\eta,\alpha}$ attacks $[x,y]$. 
In other words, $\mathrm{d}(E)$ defends each member of $\mathrm{d}(E)$.

\item For all $\tuple{\alpha, \beta}\in \nmd{\mathcal{F}} $, if $\tuple{\alpha, \beta}$ is defended by $\mathrm{d}(E)$, then both $\alpha$ and $\beta$ are defended by $\mathrm{def}(\mathrm{d}(E)) = E$. Since $E$ is a complete extension,  $\alpha,\beta\in E$. So, $\tuple{\alpha, \beta}\in \mathrm{d}(E)$.   
\end{itemize}
\end{proof}

\noindent
4. Proof of Corollary \ref{cor-com}

\begin{proof}
For $E = \mathrm{def}(D)$, under grounded semantics, we need to prove that $E$ is minimal (w.r.t. set-inclusion). Assume the contrary. Then $\exists E^\prime \subsetneq E$ such that $E^\prime$ is a grounded extension. According to Theorem \ref{prop-adco}, it holds that $\mathrm{d}(E)\in \mathrm{CO}(\dg{\mathcal{F}})$ and $ \mathrm{d}(E^\prime) \in \mathrm{CO}(\dg{\mathcal{F}})$. Since $E^\prime \subsetneq E$, it holds that $\mathrm{d}(E^\prime) \subsetneq \mathrm{d}(E)$. Since $\mathrm{d}(E) = \mathrm{d}(\mathrm{def}(D)) = D$, $\mathrm{d}(E^\prime) \subsetneq D$. It turns out that $D$ is not a minimal complete extension, contradicting $D \in \mathrm{GR}(\dg{\mathcal{F}})$. 

Under preferred semantic, similarly, it is easy to verify that $E$ is maximal (w.r.t. set-inclusion).  So, for all $D \in \mathrm{PR}(\dg{\mathcal{F}})$,  $\mathrm{def}(D) \in \mathrm{pr}(\mathcal{F})$.

Under stable semantics, we need to prove that for all $\alpha\in \AR\setminus E$: $\attack{E}{\alpha}$. Assume the contrary. Then,  $\exists\alpha\in \AR\setminus E$ such that ${E}$ does not attack ${\alpha}$. There are the following possible cases:
\begin{itemize}
\item  $\alpha$ self-attacks. In this case, $(\o, \alpha)\in \dgn{\mathcal{F}} \setminus D$. Since $D$ is a stable extension, $D$ attacks $(\o, \alpha)$. So, $E$ attacks $\alpha$. Contradiction. 
\item $\alpha$ does not self-attack. Since $\alpha$ can not be initial, $\alpha$ is attacked by some argument $\beta\in \AR$. It follows that $\beta\notin E$ and $\beta$ does not self-attack. So, $\beta$ is attacked by some argument $\gamma\in \AR$. So, $[\gamma, \alpha]\in \dgn{\mathcal{F}} $. Since $\alpha\notin E$, $[\gamma, \alpha]\notin D$. Since $D$ is a stable extension, $D$ attacks $[\gamma, \alpha]$. Since $E = \mathrm{def}(D)$ does not attack $\alpha$, $\exists\eta\in E$ such that $\eta$ attacks $\gamma$, and $\eta$ does not attack $\alpha$. Since $\alpha, \beta$ and $\eta$ do not self-attack, we have the following possible cases:
\begin{itemize}
\item $\{\eta,\beta\}$ is conflict-free: In this case, $\tuple{\eta,\beta}\in \nmd{\mathcal{F}} $. Since $E$ cannot attack $\eta$, if $\tuple{\eta,\beta}\notin D$, $\exists\psi\in E$ such that $\psi$ attacks $\beta$. So, $[\psi, \alpha]\in \dgn{\mathcal{F}} $. Since $[\psi, \alpha]\notin D$, $[\psi, \alpha]$ is attacked by $D$. Since $E$ does not attack $\psi$, $E$ attacks $\alpha$. Contradiction. 
\item $\{\eta,\beta\}$ is not conflict-free: If $\eta$ attacks $\beta$, $[\eta, \alpha]\in \dgn{\mathcal{F}} $. It follows that $E$ attacks $\alpha$. Contradiction. If $\eta$ does not attack  $\beta$,  but $\beta$ attacks $\eta$, $\tuple{\eta,\beta}\in \nmd{\mathcal{F}} $. This case also leads to a contradiction.
\end{itemize}
\end{itemize}
\end{proof}

\end{document}